\documentclass{article}

\usepackage{microtype}
\usepackage{graphicx}
\usepackage{subcaption}
\usepackage{booktabs} 

\usepackage{hyperref}

\usepackage[preprint]{icml2026}

\usepackage{amsmath}
\usepackage{amssymb}
\usepackage{mathtools}
\usepackage{amsthm}

\usepackage{array, makecell} %
\usepackage{multirow}
\usepackage{pifont}
\usepackage[table]{xcolor}
\newcommand{\round}[1]{\ensuremath{\lfloor#1\rceil}}

\usepackage{url}

\usepackage[capitalize,noabbrev]{cleveref}

\theoremstyle{plain}
\newtheorem{theorem}{Theorem}

\newtheorem{lemma}{Lemma}

\theoremstyle{definition}
\newtheorem{definition}{Definition}

\theoremstyle{remark}

\usepackage[textsize=tiny]{todonotes}

\icmltitlerunning{Masked Attention Alignment for Data-Free Quantization of Vision Transformers}

\begin{document}

\twocolumn[
  \icmltitle{Selective Coupling of Decoupled Informative Regions: Masked Attention Alignment for Data-Free Quantization of Vision Transformers}



  \icmlsetsymbol{equal}{*}

  \begin{icmlauthorlist}
    \icmlauthor{Biao Qian}{equal,1}
    \icmlauthor{Yang Wang}{equal,2}
    \icmlauthor{Yong Wu}{3}
    \icmlauthor{Jungong Han}{1}
  \end{icmlauthorlist}

  \icmlaffiliation{1}{Department of Automation, Tsinghua University, China.}
  \icmlaffiliation{2}{School of Computer Science and Information Engineering, Hefei University of Technology, China.}
  \icmlaffiliation{3}{Li Auto, China}

  \icmlcorrespondingauthor{Jungong Han}{jghan@tsinghua.edu.cn}

  \icmlkeywords{Machine Learning, ICML}

  \vskip 0.3in
]



\printAffiliationsAndNotice{\icmlEqualContribution}

\begin{abstract}
Data-Free Quantization (DFQ) addresses data security concerns by synthesizing samples, without accessing real data. It has garnered increasing attention in the context of Vision Transformers (ViTs), owing to the superiority of the self-attention mechanism compared to classical convolutional operation. However, previous DFQ arts for ViTs often suffer from a \emph{distribution mismatch} between synthetic samples and input distribution expected by quantized models $Q$, resulting in the suboptimal performance. In this paper, we propose a novel \underline{\textbf{Mask}}ed Attention \underline{\textbf{A}}lignment approach for Data-Free \underline{\textbf{Q}}uantization of ViTs, named \textbf{MaskAQ}, revealing that: 1) the semantics in the self-attention mechanism is predominantly localized to a sparse subset of patches, called \emph{informative regions}; 2) the informative regions dominate the mutual information between synthetic samples and $Q$'s outputs.
To these ends, we incorporate differential entropy maximum over patch similarity of synthetic samples, to decouple informative regions from noisy background. To couple with varied $Q$, the informative regions are selected to align full-precision models with $Q$ via a masked attention alignment objective, thus yielding high-quality synthetic samples. 
Furthermore, a periodic sample refreshing strategy comes up to endow MaskAQ with the capacity to continually adapt to the evolving state of $Q$ throughout the training process, to preserve desirable mutual information with synthetic samples.
Extensive experiments verify the merits of MaskAQ over state-of-the-art approaches across multiple backbones and downstream tasks.
{Our code is available at \emph{https://github.com/hfutqian/MaskAQ}.}
\end{abstract}

\section{Introduction}
\label{sec:intro}

Despite the remarkable performance of Vision Transformers (ViTs) \cite{dosovitskiy2020image,han2022survey,khan2022transformers} across various computer vision tasks \cite{zhang2021vit,ma2023towards,wang2024unpacking,qian2022switchable,qian2021diversifying},  their deployment on resource-constrained edge devices is hindered by the high computational and memory costs of ViTs \cite{lin2021fq}. Model quantization \cite{gholami2022survey,liu2021post,zhang2025selectq} serves as a promising approach to reduce the model complexity, which converts the pre-trained full-precision model $P$ into the quantized model $Q$. Due to the quantization error, $Q$ derived from $P$ is expected to recover the performance via the calibration \cite{liu2021post} or fine-tuning operation \cite{hubara2018quantized} upon the original training data. Unfortunately, in data-sensitive scenarios, such as social and medical domains, the original training data is not always accessible due to privacy and security issues \cite{liu2021machine}.

\begin{figure*}[t]
\centering
\includegraphics[width=1.0\textwidth]{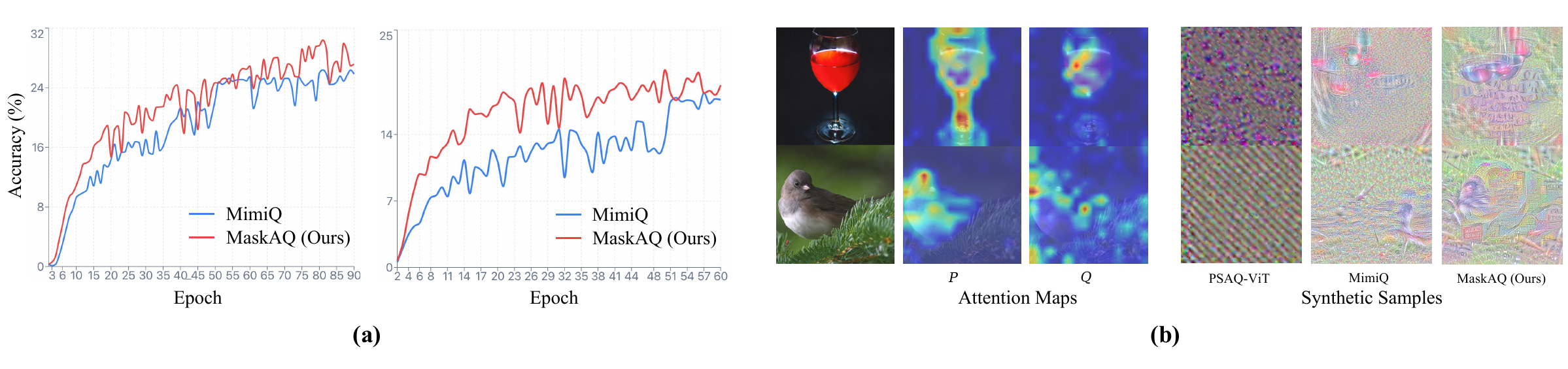}
\caption{(a) Existing arts, \emph{e.g.}, MimiQ \cite{choi2025mimiq}, suffer from a obvious performance gap compared to our MaskAQ, owing to the suboptimal quality of synthetic samples.     (b) The synthetic samples from existing arts, \emph{e.g.}, PSAQ-ViT \cite{li2022patch} and MimiQ \cite{choi2025mimiq}, usually exhibit two primary issues: \emph{semantic dispersion}, where the semantics is distributed across the entire image, deviating from a coherent image structure; and \emph{attentional disparity}, where $Q$ struggles to maintain attention alignment with $P$, owing to the absence of discriminative regions  that are easily recognizable to $Q$ within synthetic samples. The experiments are conducted with DeiT-S and DeiT-T on the ImageNet dataset.
}
\label{motivation_visual}
\end{figure*}

As a promising alternative, Data-Free Quantization (DFQ) \cite{xu2020generative,choi2021qimera,zhong2022intraq,qian2023rethinking,qian2023adaptive,dung2024sharpness,kim2025synq} has been extensively explored to quantize $P$ without access to the original data. The success of existing DFQ arts for Convolutional Neural Networks (CNNs) critically benefits from  the stable \emph{distributional prior} from Batch Normalization (BN) statistics \cite{cai2020zeroq,zhang2021diversifying}, to approximate the original data distribution for sample generation. Recently, ViT architectures have received the growing adoption, evidenced by their capacity for direct \emph{global} dependency modeling through self-attention mechanism \cite{vaswani2017attention} over the locality of CNNs, which are well-positioned to break through the performance ceiling of current DFQ arts for practical application. Such fact substantially motivates a transition of DFQ from CNNs to ViTs \cite{li2022patch,li2023psaq,ramachandran2024clamp,choi2025mimiq,hu2024sparse,zhao2025enhancing,Zhong_2025_ICCV}. These substantial efforts observed that the straightforward migration of DFQ from CNNs to ViTs suffers from a dramatic performance degradation, since ViTs, built on the Layer Normalization (LN) \cite{ba2016layer,dosovitskiy2020image}, inherently lack the BN layer with distributional priors, resulting in low-quality synthetic samples. 
Meanwhile, the inherent sensitivity of the ViT architectures, particularly their reliance on precise inter-token relationships within self-attention mechanisms \cite{vaswani2017attention}, further amplifies the quantization errors provided that the low-quality synthetic samples (\emph{e.g.}, lack of semantic structures) generate ambiguous or noisy token representations.

To remedy these issues, the pioneer work, PSAQ-ViT \cite{li2022patch} and PSAQ-ViT V2 \cite{li2023psaq}, deliver the first attempt to explore the prior information of the pre-trained ViTs to come up with a relative patch similarity metric to separate the foreground from the background for realistic samples. Following that, CLAMP-ViT \cite{ramachandran2024clamp} further leverages a patch-level contrastive learning to capture meaningful inter-patch relationships. MimiQ \cite{choi2025mimiq} focuses on inter-head attention similarity to enhance the overall sample structure. 
Despite the significant strides of these approaches, a critical question remains unclear ---  \emph{whether and how the synthetic samples for ViTs preserve crucial information for $Q$'s calibration}, potentially resulting in a pronounced performance disparity, especially for ultra-low precision, as illustrated in Fig.\ref{motivation_visual}(a). We identify two primary rationales (see Fig.\ref{motivation_visual}(b)) as follows: 1) \emph{semantic dispersion}: the semantics of synthetic samples is distributed across the entire sample, deviating from a coherent image structure; 2) \emph{attentional disparity}: the synthetic samples exhibit incomplete semantics, particularly the absence of discriminative regions that are easily recognizable to $Q$, such that $Q$ cannot focus on the semantically correct regions and struggle to align with $P$.

As motivated above, we propose a novel \underline{Mask}ed Attention \underline{A}lignment approach tailored for Data-Free \underline{Q}uantization of ViTs, dubbed \textbf{MaskAQ}, as depicted in Fig.\ref{overview}, while \emph{disclosing} that: 1) within the self-attention mechanism, the semantics is not uniformly distributed, but is highly concentrated within a sparse subset of image patches, termed \emph{\textbf{Informative Region}} (IR, Definition \ref{definition}); 2) these informative regions serve as the \emph{primary} contributors to preserve the desirable mutual information between synthetic samples and $Q$'s outputs\footnote{Desirable mutual information commonly refers to maximizing the mutual information between synthetic samples and Q’s outputs subject to an information budget constraint (see more details in Sec.\ref{theoretical_analysis}).}, \emph{i.e.}, enabling $Q$ to focus on informative regions. To this end, our basic idea is to couple the selected \emph{informative regions} of synthetic samples with varying $Q$. 
Notably, MaskAQ features three primary components: \emph{first}, we decouple informative regions from the noisy background within synthetic samples by maximizing the differential entropy of their patch similarity. \emph{Second}, an adaptive masking mechanism for the informative regions comes up to couple with varying $Q$; upon such mask, we leverage a masked attention alignment objective to bridge the synthetic samples with $Q$, thereby facilitating the synthesis of high-quality samples. \emph{Finally}, a periodic refreshing strategy comes up to maintain mutual information between synthetic samples and outputs for refined $Q$, enabling MaskAQ to continuously adapt to the evolution of $Q$ throughout the training process. 

We theoretically and empirically demonstrate the effectiveness of MaskAQ and showcase the significant improvements  over state-of-the-art approaches across multiple backbones and downstream tasks (\emph{i.e.}, classification, detection and segmentation). Notably, under 3-bit case, MaskAQ achieves up to 3.1\% Top-1 accuracy  gains over existing arts on ImageNet for DeiT-T. 
The core contributions of this work are summarized as follows:
\begin{itemize}
\item  We explicitly identify the gap or discrepancy between $P$ and $Q$, and observe two typical rationales, \emph{i.e.}, semantic dispersion and attentional disparity, which pinpoints the critical bottleneck of prior arts.
\item We propose MaskAQ, to revisit sample synthesis of DFQ for ViTs from a novel information bottleneck (IB) perspective, which shifts the goal from approximating real data (prior DFQ work) to preserving the crucial information for calibrating Q.
\item The IB analysis motivates several technical components, \emph{i.e.}, informative region decoupling, masked attention coupling and periodic refreshing, enabling calibration-relevant sample synthesis.
\item Extensive empirical evaluations across various backbones, downstream tasks and bit widths, demonstrate the superiority of our MaskAQ to existing DFQ approaches.
\end{itemize}

\section{Masked Attention Alignment for Data-Free Quantization of ViTs}
\label{sec:method}

\subsection{Preliminaries}
To facilitate the understanding, we begin by outlining data-free quantization approaches for ViTs, mainly involving sample synthesis process.

\subsubsection{Sample Synthesis}
DFQ approaches generally synthesize samples by exploiting the pre-trained full-precision models $P$. Given a synthetic sample $x \in \mathbb{R}^{ H \times W}$ initialized from Gaussian noise, the one-hot ($OH$) loss is adopted to match the category information: 
\begin{equation}
\mathcal{L}_{OH}=CE(z_p,y),
\label{L_onehot}
\end{equation} 
where $z_p$ is the prediction output of $P$. $CE(\cdot,\cdot)$ represents the Cross-Entropy function and $y$ is the class label. 
To improve the image quality, the total variance (TV) loss \cite{rudin1992nonlinear,johnson2016perceptual} is adopted to promote enforce spatial smoothness as
\begin{equation}
\mathcal{L}_{TV} = || x_{h,w} - x_{h+1,w}  ||^2_2 + || x_{h,w} - x_{h,w+1}  ||^2_2,
\label{L_tv}
\end{equation} 
where $h,w$ stands for the pixel position. Besides, the inter-head ($IH$) attention similarity loss \cite{choi2025mimiq} is utilized to enhance overall image structure during synthesis process, as formulated below: 
\begin{small}
\begin{equation}
\mathcal{L}_{IH} = \frac{1}{LN} \sum_{l=1}^{L} \sum_{n=1}^{N} (1 - \frac{1}{N_h^2} \sum_{i,j=1}^{N_h} D_{ssim}(A_{l,i,n}, A_{l,j,n})),
\label{L_ih}
\end{equation}
\end{small}
where $D_{ssim}(\cdot,\cdot)$ stands for the  structural similarity index measure function \cite{wang2004image}.  $L$, $N_h$ and $N$ are the total number of multi-head self-attention
(MSA) layers, attention heads and image patches. Accordingly, $A_{l,i(j),n}$ denotes the attention map at the $l$-th MSA layer, $n$-th query position and $i(j)$-th attention head.

\begin{figure*}[t]
\centering
\includegraphics[width=0.9\textwidth]{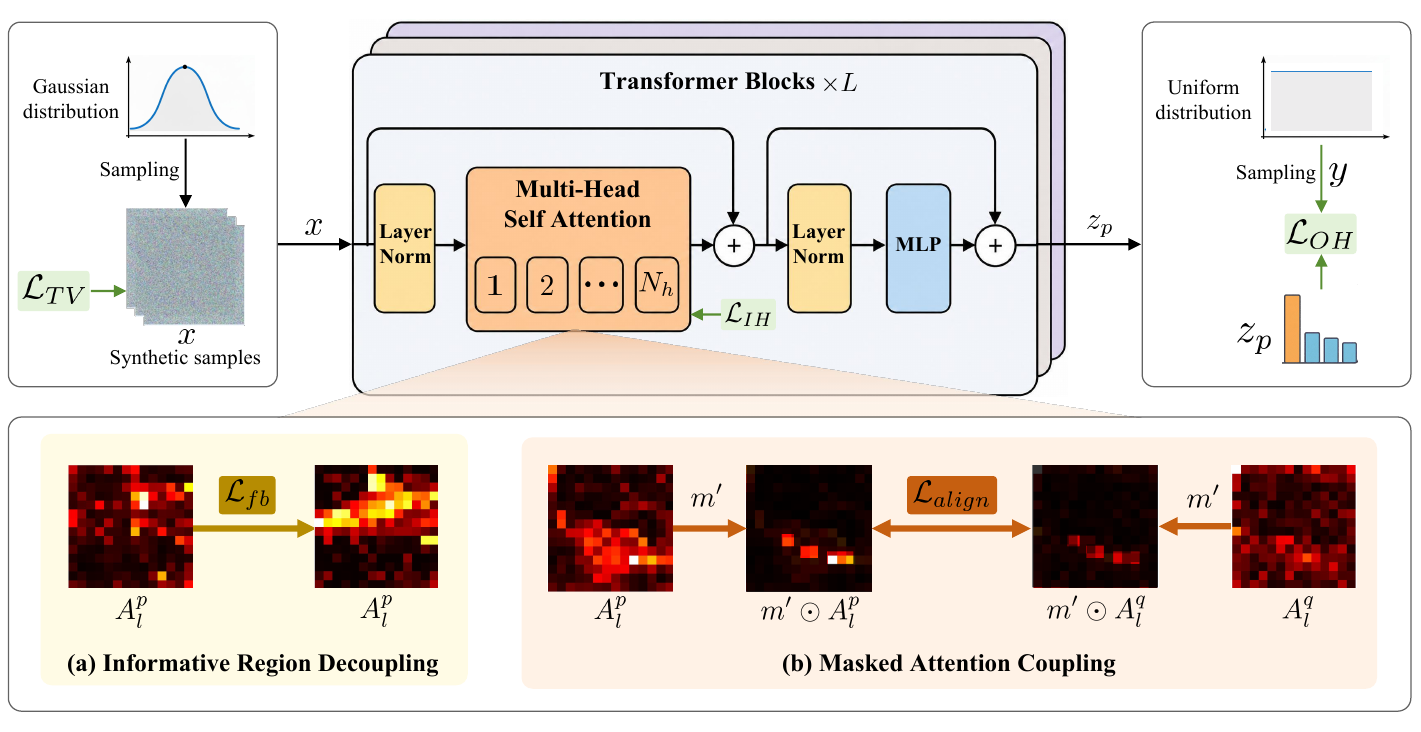}
\caption{Illustration of our proposed MaskAQ, where our basic idea is to couple the selected informative regions of synthetic samples $x$ with varying $Q$, yielding high-quality of synthetic samples. MaskAQ is achieved via two primary components: (a) \emph{informative region decoupling} (Sec.\ref{region_decoupling}) to construct coherent image structure via differential entropy maximum ($\mathcal{L}_{fb}$ in Eq.(\ref{l_fb})) over patch similarity; and (b) \emph{masked attention coupling}  (Sec.\ref{region_coupling}) to leverage a masked attention alignment objective equipped with an adaptive mask $m^{\prime}$ ($\mathcal{L}_{align}$ in Eq.(\ref{L_align})), which aligns attention maps ($A_l^{p}$ and $A_l^{q}$) from $l$-th layer of $P$ and $Q$, thus bridging the synthetic samples with the varying state of $Q$.
}
\label{overview}
\end{figure*}

\subsection{Masked Attention Alignment}
\label{masked_attention_alignment}

\subsubsection{Informative Region Decoupling}
\label{region_decoupling}

As aforementioned, \emph{semantic dispersion} is considered as a key challenge in data-free synthesis for ViTs, where gradients may spread over many patches, producing visually incoherent samples and diluting the supervision signal for calibrating $Q$. 
To address this, we introduce the following \textbf{Informative Region (IR)} as a principled, attention-driven decomposition of a synthetic sample into \emph{high-value} (\emph{i.e.}, semantically critical) patches and \emph{noisy background} patches:

\begin{definition}[Informative Region]
\label{definition}
Given a synthetic sample $x$ with $N$ patches $\{x_n\}_{n=1}^{N}$ and their attention weights $\{\alpha_n\}_{n=1}^{N}$, we define the informative region as
\begin{equation}
IR=\{x_n \mid \alpha_{n} \geq \alpha_{[k_{ir}]}\},
\end{equation}
where $\alpha_{[k_{ir}]}$ denotes the $k_{ir}$-th largest value among attention weights $\{\alpha_n\}_{n=1}^{N}$.
\end{definition}

Notably, Definition \ref{definition} paves the cornerstone of our MaskAQ: it explicitly identifies where $P$ concentrates attention and enables us to separately regularize foreground structure and background noise during synthesis, which further endow us to  decouple \emph{informative} patches from background noise by encouraging \emph{diverse and non-degenerate} attention patterns, inspired by \cite{li2022patch}. 
Specifically, for layer $l$ of $P$, let $A_l^p \in \mathbb{R}^{N \times N}$ be the attention matrix. We compute pairwise cosine similarity between attention vectors:

\begin{equation}
S_{ij} = \frac{{a}_i \cdot {a}_j}{||{a}_i|| ||{a}_j||}, \quad i,j = 1,2,\ldots,N,
\end{equation}
where $a_i$ is the $i$-th row of $A_l^p$, representing the attention distribution from patch $i$ to all patches. 
We then form a similarity distribution $p_l(s)$ by normalizing the histogram of $S_{ij}$ (excluding $i=j$), and compute the Shannon entropy:
\begin{equation}
H(p_l) = -\sum_{k} p_l(s_k) \log p_l(s_k),
\end{equation}
where $s_k$ are the discrete values of similarity. By maximizing $H(p_l)$, it avoids collapsed attention similarities and promotes diverse attention vectors, which empirically leads to clearer separation between informative patches and noisy background. We thus define the similarity-entropy regularizer as
\begin{equation}
\mathcal{L}_{fb} = -\sum_{l=1}^{L} H(p_l).
\label{L_fb0}
\end{equation}
Since  the histogram estimator in Eq.(\ref{L_fb0}) may introduce binning sensitivity and less stable gradients during synthesis,  we adopt the Gaussian differential entropy for optimization stability. 
Following an information-theoretic approximation, the similarity distribution between attention vectors is approximately Gaussian, \emph{i.e.}, $\mathcal{N}(\mu_{l}, \sigma_{l}^2)$.  In practice, $\sigma_{l}^2$ is computed as the empirical variance of all off-diagonal pairwise cosine similarities $S_{ij}$, namely
\begin{align}
&\sigma_{l}^2=\frac{1}{M} \sum_{i \neq j} (S_{ij}-\mu_{l})^2, \\
&\mu_{l}=\frac{1}{M} \sum_{i \neq j} S_{ij}, M=N(N-1).
\end{align}
Then its differential entropy surrogate is
$H_l=\frac{1}{2}\log(2\pi e\,\sigma_l^2)$ \cite{cover1999elements,shannon1948mathematical}, where $2\pi e$ is a constant. 
Therefore, Eq.~(\ref{L_fb0}) can be equivalently formulated as
\begin{equation}
\mathcal{L}_{fb} = -\frac{1}{L} \sum_{l=1}^{L} H_{l}.
\label{l_fb}
\end{equation}
Eq.~(\ref{l_fb}) serves as a smooth and tractable surrogate for the similarity-entropy regularizer in Eq.(\ref{L_fb0}).
Minimizing Eq.~(\ref{l_fb}) regularizes the distributional geometry of attention vectors \cite{tishby2000information}, which strengthens the structural coherence of synthesized samples and makes the informative region more distinguishable; see Theorem \ref{theorem2} and Fig.~\ref{overview}(a).

\subsubsection{Masked Attention Coupling}
\label{region_coupling}

The informative region (IR) identified by $P$ serves as a reliable semantic anchor for sample synthesis. 
However, under ultra-low precision, \emph{attentional disparity} is regarded as another key challenge, where the quantization operation may distort the self-attention dynamics of $Q$, such that directly aligning full attention maps tends to over-regularize irrelevant background patches and may drive synthesized samples away from real distribution. 
Based on IR, we propose a masked attention alignment objective to couple synthesis process with the \emph{current state} of $Q$. 
Specifically, we construct an \emph{adaptive mask} to select a subset of informative patches and enforce the desirable mutual information between $P$ and $Q$ within these selected regions through \emph{an alignment objective}.

\noindent \textbf{Adaptive mask generation}. 
According to \textbf{IR} (\textbf{Definition} \ref{definition}), the regions attended by $P$ and $Q$ differ substantially, especially under low precision; moreover, due to quantization-induced perturbations, $Q$ may fail to consistently focus on the target regions. 
Therefore, we  construct a binary mask $m\in\{0,1\}^{N}$ from $P$'s attention weights as
\begin{equation}
m[n] = 
\begin{cases} 
1 & \alpha_{n} \geq \alpha_{[k]}, \\
0 & otherwise,
\end{cases}
\label{definition1}
\end{equation}
where $m[n]$ denotes $n$-th element of $m$.  $\alpha_{[k]}$ represents the $k$-th largest value in the attention weights $\{\alpha_1, \alpha_2, \dots, \alpha_{N}\}$ corresponding to the $N$ patches of $x$. Notably, $m$ selects a \emph{stricter} subset of informative patches from Definition~\ref{definition} to prevent over-regularization, hence $\alpha_{[k]}>\alpha_{[k_{ir}]}$. 
To avoid overfitting to a fixed set of attended patches, we further incorporate a \emph{stochastic dropping} strategy to update the mask during synthesis. 
Given $m$ of each sample, we collect selected indices
\begin{equation}
\mathcal{P} = \{ n \in \{1, 2, \dots, N\} \mid {m}[n] = 1 \}.
\end{equation}
Based on a dropout probability $p_{\text{drop}} \in [0, 1)$ that introduces a moderate level of stochasticity, the number of positions to retain is determined as
\begin{equation}
k_{\text{keep}} = \max\left(k_{\min}, \lfloor |\mathcal{P}| \cdot (1 - p_{\text{drop}}) \rfloor\right),
\label{num_pos}
\end{equation}
where $k_{\min}$ is a predefined minimum, serving as a safeguard to avoid dropping all patches; $\lfloor\cdot\rfloor$ is the floor function. Then, we randomly select $k_{\text{keep}}$ retention positions from $\mathcal{P}$ without replacement and denote their index as $\mathcal{P}_{\text{keep}}$, such that ${m}$ is updated to ${m}^{\prime}$.

\noindent \textbf{Alignment objective}.
The masked regions indicated by $m^\prime$ provide the most relevant spatial support for interacting with $Q$. 
We enforce consistency between $P$ and $Q$ within these regions by minimizing

\begin{equation}
\mathcal{L}_{align} = \frac{\sum_{l=1}^{L}  \left\| m^{\prime} \odot (A_l^{p} - A_l^{q}) \right\|_1}{\|m^{\prime}\|_0},
\label{L_align}
\end{equation}
where $A_l^{p}$ and $A_l^{q}$ are the attention maps of $l$-th layer in $P$ and $Q$. $||\cdot||_1$ and $||\cdot||_0$ are $\ell_1$ and $\ell_0$ norm; $\odot$ denotes element-wise multiplication. By minimizing Eq.(\ref{L_align}), the masked regions within synthetic samples are optimized to capture the semantics relevant to $Q$; see Theorem \ref{theorem2} and Fig.\ref{overview}(b).

\subsection{Overall Pipeline}

\subsubsection{Informative Region Oriented Synthesis}
\label{}

Taking Eq.(\ref{l_fb}) and Eq.(\ref{L_align}) all together, the overall optimization objective for sample synthesis is confirmed as 
\begin{equation}
\mathcal{L}_{S} = \mathcal{L}_{prior} + \lambda_{fb} \mathcal{L}_{fb} + \lambda_{align} \mathcal{L}_{align},
\label{L_G}
\end{equation}
where $\lambda_{fb}$ and $\lambda_{align}$ are the balancing parameters. $\mathcal{L}_{prior}$ is defined as the combination of  Eq.(\ref{L_onehot}), Eq.(\ref{L_tv}) and Eq.(\ref{L_ih}), which is further utilized to incorporate prior information into the synthetic samples.

We remark that the attention alignment over \emph{masked informative regions} improves the structural coherence and reduces distributional discrepancies between synthetic samples and the expected input of varying $Q$. Subsequently, $Q$ is optimized through masked calibration on these samples to restore its performance, as discussed in the next.

\subsubsection{Masked Calibration}
\label{masked_calibration}
Following the above, considering the distinct response outputs of both $P$ and $Q$ to the informative versus non-informative regions (Definition \ref{definition}), we prioritize the former during calibration process. To be analogous to Eq.(\ref{definition1}), we can obtain the binary mask $m^c_l$ of $l$-th layer for calibration process, such that the patch weight factor $w_l=1+m^c_l \cdot (w-1)$, where $w$ is the weight assigned to the informative regions.
Finally, $Q$ is calibrated via the following optimization objective:
\begin{small}
\begin{equation}
\mathcal{L}_Q
=
\frac{1}{L\,N_h}
\sum_{l=1}^{L}
\sum_{n_h=1}^{N_h}
\frac{\displaystyle
\sum_{n=1}^{N} w_{l,n}\;
D\bigl(h^p_{l,n_h,n},\,h^q_{l,n_h,n}\bigr)
}
{\displaystyle
\sum_{n=1}^{N} w_{l,n}
},
\label{calibration_loss}
\end{equation}
\end{small}
where $h^p_{l,n_h}$ and $h^q_{l,n_h}$ are $n_h$-th attention head outputs of the $l$-th layer in $P$ and $Q$. $D(\cdot,\cdot)$ represents the similarity metric function.

\subsubsection{Periodic Sample Refreshing}
\label{refreshing}
It is worth noting that, as $Q$ is continually optimized to recover the performance during calibration process (Eq.(\ref{calibration_loss})), the initial synthetic samples may become ineffective for $Q$'s current state.
To preserve desirable mutual information between synthetic samples and updating $Q$, we develop a periodic refreshing strategy, which endows MaskAQ with the capacity to continually adapt to the evolving state of $Q$. Specifically, we regularly update the synthetic samples via Eq.(\ref{L_G}) at fixed intervals throughout the training process. The whole training process is summarized in Algorithm \ref{alg:maskaq}.

\begin{algorithm}[t]
\caption{\textbf{MaskAQ: Masked Attention Alignment for Data-Free Quantization of ViTs}}
\label{alg:maskaq}
\begin{algorithmic}[1]
\REQUIRE Initial synthetic samples $x$; $P$ and $Q$ parameterized by $\theta_p$ and $\theta_q$; an initial learning rate $\eta$; an initial binary mask $m^{\prime}$.
\FOR{Refreshing Number}
    \FOR{Iteration Number of Synthesis}
        \STATE Compute adaptive binary mask $m^{\prime}$
        \STATE Estimate sample synthesis loss $\mathcal{L}_{S}$ via Eq.(\ref{L_G})
        \STATE Update synthetic samples $x$
    \ENDFOR
    \STATE Obtain the synthesized samples $x$
    \FOR{Iteration Number of Calibration}
    \STATE Estimate calibration loss $\mathcal{L}_{Q}$ via Eq.(\ref{calibration_loss})
    \STATE Update $Q$: $\theta_q \leftarrow \theta_q - \eta \frac{\partial \mathcal{L}_{Q}}{\partial \theta_q}$
    \STATE Update learning rate $\eta$
\ENDFOR
\ENDFOR
\ENSURE Quantized model $Q$.
\end{algorithmic}
\end{algorithm}

\subsection{Theoretical Analysis: Information Bottleneck Perspective}
\label{theoretical_analysis}
One may wonder why MaskAQ benefits $Q$'s calibration. We provide further insights into MaskAQ from information bottleneck (IB) perspective \cite{tishby2000information,alemi2017deep}.
Formally, given synthetic samples $x$, the output representations of $P$ and $Q$ are denoted as $z_p$ and $z_q$. 
According to the information bottleneck principle, an ideal quantization aims to \emph{compress} the
input information from $x$ while \emph{preserving} predictive information, thus the mutual information for $P$ and $Q$ meets the followings:
\begin{equation}
\begin{aligned}
I(x; z_q)& \leq I(x; z_p) \  \text{and} \  I(z_q; y) \approx I(z_p; y),
\label{appro}
\end{aligned}
\end{equation}
where $I(\cdot; \cdot)$ is utilized to calculate the mutual information and $y$ is the task target. To make the approximation in Eq.(\ref{appro}) precise, we view quantization as imposing an information budget.
Let $C$ denote an upper bound governed by the quantization precision (\emph{e.g.}, bit width), such that $I(x; z_q) \leq C$. Then the optimal quantization process is to solve the following objective
\begin{equation}
\max I(z_q; y), \quad \text{s.t.} \ I(x; z_q) \leq C.
\label{optimal_objective_main}
\end{equation}
Eq.(\ref{optimal_objective_main}) suggests that the limited representational ability should be allocated to the most informative regions as per Definition \ref{definition}, thereby requiring $Q$ to focus on those regions in synthetic samples.
To understand how $Q$ retains predictive information of $P$, we offer the following theorem:

\begin{theorem}[Informative Region Alignment]
\label{theorem1}
Let $a_r$ denote the informative-region variable of real samples, and let $z_p^r$ and $z_q^r$ be the corresponding prediction outputs of $P$ and $Q$, respectively. 
Assume that for some $\varepsilon_r\le \frac{1}{2}$,
\begin{align}
D_{\mathrm{TV}}\!\Big(p_{\theta_q}(z_q^r,y\mid a_r),\ p_{\theta_p}(z_p^r,y\mid a_r)\Big)&\le \varepsilon_r,
\label{th21_assump_joint_main}\\
D_{\mathrm{TV}}\!\Big(p_{\theta_q}(a_r,y\mid z_q^r),\ p_{\theta_p}(a_r,y\mid z_p^r)\Big)&\le \varepsilon_r,
\label{th21_assump_cond_main}
\end{align}
where $D_{\mathrm{TV}}(\cdot,\cdot)$ denotes the total variation (TV) distance.
Then $Q$ approximately preserves $P$’s predictive information on informative regions in the sense that
\begin{equation}
\big|I_{\theta_q}(z_q^r;y)-I_{\theta_p}(z_p^r;y)\big|\ \le\ \Delta_r(\varepsilon_r),
\label{th21_concl_main}
\end{equation}
where $\Delta_r(\varepsilon_r)$ vanishes as $\varepsilon_r\to 0$.
\end{theorem}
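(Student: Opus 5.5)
I would prove the bound by continuity of entropy under total variation, in the form of the Fannes--Audenaert inequality for discrete variables (or the Zhang/Alicki--Fannes type bound for conditional entropy). All variables $z_q^r, z_p^r, y, a_r$ are treated as taking values in finite alphabets; outputs are discretized predictions and patches are quantized. Let $h_b$ be the binary entropy. The target quantity is split as
\[
I_{\theta}(z;y) = H_\theta(y) - H_\theta(y\mid z),
\]
and I would control each piece using the assumptions, conditioning on $a_r$ and marginalizing. Since both models act on the same real samples, the law of $a_r$ is common to $P$ and $Q$. Hence assumption (\ref{th21_assump_joint_main}), after averaging over $a_r$ and using convexity of $D_{\mathrm{TV}}$, gives
\[
D_{\mathrm{TV}}\big(p_{\theta_q}(z_q^r,y),\,p_{\theta_p}(z_p^r,y)\big)\le\varepsilon_r,
\]
and marginalizing further gives $D_{\mathrm{TV}}(p_{\theta_q}(y),p_{\theta_p}(y))\le\varepsilon_r$.

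\textbf{Step 1.} Apply the Fannes--Audenaert bound, valid for $\varepsilon\le\tfrac12$ (which is exactly why the hypothesis $\varepsilon_r\le\tfrac12$ appears):
\[
|H(U)-H(V)|\le \varepsilon\log(|\mathcal{U}|-1)+h_b(\varepsilon).
\]
Using it on the joint $(z,y)$ and on the marginal $y$ controls $H(z,y)$ and $H(y)$.

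\textbf{Step 2.} Write $I(z;y)=H(z)+H(y)-H(z,y)$. The marginal on $z$ is again within $\varepsilon_r$ by marginalization, so each of the three entropies differs between $P$ and $Q$ by at most a Fannes term. Summing yields
\[
\Delta_r(\varepsilon_r)=\varepsilon_r\big[\log|\mathcal{Z}|+\log|\mathcal{Y}|+\log(|\mathcal{Z}||\mathcal{Y}|)\big]+3h_b(\varepsilon_r),
\]
which tends to $0$ as $\varepsilon_r\to0$ because $h_b(\varepsilon)\to0$.

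\textbf{Step 3.} Assumption (\ref{th21_assump_cond_main}) is used to make the bound tighter and to tie it to the informative region. Write
\[
I(z;y)=I(z;y,a_r)-I(z;a_r\mid y).
\]
Then use the conditional TV bound together with the Alicki--Fannes continuity of conditional entropy,
\[
|H(A\mid B)-H(A'\mid B')|\le 2\varepsilon\log|\mathcal{A}|+(1+\varepsilon)h_b\!\big(\tfrac{\varepsilon}{1+\varepsilon}\big),
\]
to control $H(a_r,y\mid z)$. This yields an alternative $\Delta_r$ depending on $|\mathcal{A}||\mathcal{Y}|$ rather than $|\mathcal{Z}|$. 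Taking the minimum of the two bounds is optional; either one suffices for the stated qualitative claim.

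\textbf{Main obstacle.} The hard point is the passage from the conditional-on-$a_r$ TV bound to unconditional joint bounds. This needs the distribution of $a_r$ to be identical under $\theta_p$ and $\theta_q$, and I would state that explicitly as an implicit assumption, justified because $a_r$ is defined from the real data and $P$'s attention. Without it, one must add $D_{\mathrm{TV}}$ of the $a_r$-marginals to the bound. A secondary issue is that for continuous $z$ the Fannes bound fails, so I would either restrict to finite alphabets or assume bounded densities and use a discretization argument.
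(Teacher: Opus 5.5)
Your Steps 1--2 already give a complete, correct proof, but by a different route from the paper.

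The paper applies the chain rule $I(z;y)=I(z;y\mid a_r)+I(a_r;y)-I(a_r;y\mid z)$ to both models and cancels $I(a_r;y)$ as a shared term. It then bounds the two remaining differences separately: the one conditioned on $a_r$ uses the first hypothesis, and the one conditioned on the model output uses the second. Each is expanded into three conditional entropies and handled by conditional entropy continuity. The resulting $\Delta_r$ has six $\phi$-terms involving $|\mathcal{Z}_q^r|$, $|\mathcal{Y}|$ and $|\mathcal{A}_r|$.

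You instead average the first hypothesis over the law of $a_r$ to get an unconditional TV bound on $(z,y)$, then apply Fannes--Audenaert to $H(z)$, $H(y)$ and $H(z,y)$. Your bound coincides (up to $\log(M-1)$ versus $\log M$) with the paper's bound on its first piece $|\Delta_1|$ alone. It is therefore tighter, and it shows the second hypothesis is not needed for this theorem.

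The extra hypothesis you flag, a common law of $a_r$ under $\theta_p$ and $\theta_q$, is also used tacitly by the paper. Cancelling $I(a_r;y)$ requires the law of $(a_r,y)$ to be shared, and the paper's conditional entropy continuity lemma is stated with a single weighting $P(A{=}a)$. Your route also avoids a soft spot in the paper's $\Delta_2$ step. There the conditioning variables $z_q^r$ and $z_p^r$ have different laws, so that lemma does not apply literally without an extra bound on the output marginals.

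What the paper's decomposition buys is structure rather than strength. It keeps $I(a_r;y)$ explicit, which is the template reused in Theorem~\ref{theorem2}, where $I(a_s;y)\neq I(a_r;y)$ produces the $\xi$ term and the two alignment hypotheses are tied to $\mathcal{L}_{align}$.

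Your Step~3 is unnecessary and, as written, under-justified. An Alicki--Fannes--Winter bound on $H(a_r,y\mid z)$ needs a TV bound on the joint law of $(a_r,y,z)$. The second hypothesis alone does not supply this; you would have to combine it with the output-marginal bound from the first. The term $I(z;a_r\mid y)$ is also never controlled, so drop Step~3 or label it a remark.
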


\noindent The proof is deferred to Appendix \ref{proof_the1}. Theorem \ref{theorem1} suggests that the alignment between $P$ and $Q$ on informative regions of real samples is sufficient to maintain predictive power. However, under data-free scenarios, we have access to synthetic samples rather than real ones. This naturally leads to a critical question: under what conditions can the synthetic samples serve as effective substitutions for real samples in preserving mutual information with respect to $y$? The following theorem claims the case of synthetic samples upon the informative regions.

\begin{theorem}[Synthetic Sample upon Informative Regions]
\label{theorem2}
Let $a_s$ denote the informative-region variable of synthetic samples $x_s$, and let $z_q^s$ be the corresponding prediction outputs of $Q$. 
If there exists $\varepsilon_s\le \frac{1}{2}$ such that
\begin{equation}
D_{\mathrm{TV}}\!\Big(p_{\theta_q}(z_q^s,y\mid a_s),\ p_{\theta_p}(z_p^r,y\mid a_r)\Big)\le \varepsilon_s, \\
\label{th22_align_joint_main0}
\end{equation}
\begin{equation}
D_{\mathrm{TV}}\!\Big(p_{\theta_q}(a_s,y\mid z_q^s),\ p_{\theta_p}(a_r,y\mid z_p^r)\Big)\le \varepsilon_s,
\label{th22_align_cond_main0}
\end{equation}
and the region-label informativeness between synthetic and real informative regions is matched:
\begin{equation}
\big|I(a_s;y)-I(a_r;y)\big|\le \xi,
\label{th22_align_regioninfo_main0}
\end{equation}
then synthetic samples enable $Q$ to approximate $P$ in predictive information:
\begin{equation}
\big|I_{\theta_q}(z_q^s;y)-I_{\theta_p}(z_p^r;y)\big|
\ \le\ 
\Delta_s(\varepsilon_s)+\xi+\Delta_a(\varepsilon_s),
\label{th22_concl_main}
\end{equation}
where $\Delta_s(\varepsilon_s)$ and $\Delta_a(\varepsilon_s)$ vanish as $\varepsilon_s\to 0$.
\end{theorem}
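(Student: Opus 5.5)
The plan is to mirror the structure of Theorem~\ref{theorem1}, using the informative-region variable as a bridge. I would decompose the mutual information through the region variable. For each model-data pair $(z,a)\in\{(z_q^s,a_s),(z_p^r,a_r)\}$, use the chain-rule identity
\begin{equation*}
I(z;y) = I(a;y) + I(z;y\mid a) - I(a;y\mid z),
\end{equation*}
which follows from expanding $I(z,a;y)$ in two ways. Subtracting the two instances and applying the triangle inequality gives
\begin{equation*}
\big|I_{\theta_q}(z_q^s;y)-I_{\theta_p}(z_p^r;y)\big|
\le \big|I(a_s;y)-I(a_r;y)\big| + \big|I(z_q^s;y\mid a_s)-I(z_p^r;y\mid a_r)\big| + \big|I(a_s;y\mid z_q^s)-I(a_r;y\mid z_p^r)\big|.
\end{equation*}
The first term is bounded by $\xi$ directly from assumption (\ref{th22_align_regioninfo_main0}).

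For the second term, I would write $I(z;y\mid a)=H(y\mid a)-H(y\mid z,a)$, or equivalently as an entropy combination of the conditional joint $p(z,y\mid a)$, averaged over $a$. Under assumption (\ref{th22_align_joint_main0}), the conditional joints are within TV distance $\varepsilon_s\le\frac12$. A continuity bound for entropy (Fannes--Audenaert or the Zhang/Ho--Yeung bound $|H(P)-H(Q)|\le \varepsilon\log(|\mathcal{S}|-1)+h_b(\varepsilon)$, valid for $\varepsilon\le\frac12$) then controls each entropy difference. This yields $\Delta_s(\varepsilon_s)=c_1\varepsilon_s\log(|\mathcal Z||\mathcal Y|)+c_2h_b(\varepsilon_s)$, which vanishes as $\varepsilon_s\to0$. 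The third term is handled symmetrically using assumption (\ref{th22_align_cond_main0}), giving $\Delta_a(\varepsilon_s)$ of the same form with $|\mathcal A||\mathcal Y|$. This is exactly the argument presumably used in Theorem~\ref{theorem1}, so I would invoke its proof step verbatim for these two terms, with $(z_q^s,a_s)$ replacing $(z_q^r,a_r)$.

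The main obstacle is that the two sides live on different conditioning variables ($a_s$ versus $a_r$, and $z_q^s$ versus $z_p^r$). The TV assumptions therefore compare conditional laws, not the marginals of the conditioning variables, so averaging the entropy bounds over $a$ requires some coupling. I would first try to interpret the assumptions as uniform-in-the-conditioning-value bounds on a common alphabet. Under that reading the averaged entropy differences are bounded pointwise, and the marginal mismatch affects only $H(y\mid a)$-type terms. Those terms are absorbed into $\xi$, since $I(a;y)=H(y)-H(y\mid a)$ with the label marginal $H(y)$ shared. If that is insufficient, I would instead read the assumptions as bounds on the full joints $p(z,a,y)$, which makes the entropy-continuity step immediate. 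The finite-alphabet assumption, needed for the log-cardinality factors, would be stated explicitly.
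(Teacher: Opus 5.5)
\textbf{Gap.} Your three-term split ($T_1$, the $\xi$-term, $T_3$) and the use of entropy continuity for $T_1$ and $T_3$ follow the paper's route exactly. The paper closes $T_1$ and $T_3$ by applying its averaged conditional-entropy continuity lemma with $a_s$ versus $a_r$, and $z_q^s$ versus $z_p^r$, as the conditioning variables. That lemma tacitly requires the two sides to share the law of the conditioning variable. You correctly identified this as the weak point.

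Your primary patch does not close it, and nothing can under the stated hypotheses. The claim that, under uniform-in-the-conditioning-value bounds, the marginal mismatch only touches $H(y\mid a)$-type terms is false. In $T_1$ the mismatch also reweights $H(y\mid z,a)$. In $T_3$, the symmetric ``absorb into $\xi$'' move would need $H(y\mid z)$-type terms, i.e.\ the very quantity $I(z;y)$ you are bounding.

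\emph{Counterexample.} Let $b\in\{0,1\}$, let $y$ be uniform on $\{0,1\}$ and independent of $b$, and on both sides set $a=(b,y)$ and $z=(b,\,by)$. Take $\Pr(b=1)=\delta$ for the synthetic triple $(a_s,z_q^s,y)$ and $\Pr(b=1)=1-\delta$ for the real triple $(a_r,z_p^r,y)$, with $0<\delta<\frac12$.
\begin{itemize}
\item Both $a$ and $z$ reveal $b$, so $p(z,y\mid a)$ and $p(a,y\mid z)$ coincide on the two sides for every conditioning value. Both TV hypotheses therefore hold with $\varepsilon_s=0$ (hence for every $\varepsilon_s\le\frac12$), under either the averaged or the uniform reading.
\item Since $a$ determines $y$, $I(a_s;y)=I(a_r;y)=\log 2$, so $\xi=0$; the label marginal is also shared.
\item Yet $I(z;y)=\Pr(b=1)\log 2$, so $|I_{\theta_q}(z_q^s;y)-I_{\theta_p}(z_p^r;y)|=(1-2\delta)\log 2$, whereas the claimed bound tends to $0$ as $\varepsilon_s\to 0$.
\item Here $T_1=T_2=0$. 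The whole gap sits in $T_3$, because $I(a;y\mid z)=\Pr(b=0)\log 2$ is reweighted between the two sides.
\end{itemize}

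\textbf{Your fallback.} Reading both hypotheses as TV bounds on the full joint law of $(z,a,y)$ over a common alphabet does give a correct proof, but of a strengthened statement. It works only because it controls the conditioning marginals. Under it your decomposition is superfluous: marginalize to $(z,y)$ and apply the unconditional continuity bound to $H(z)$, $H(y)$, $H(z,y)$. This gives $|I_{\theta_q}(z_q^s;y)-I_{\theta_p}(z_p^r;y)|\le\phi(\varepsilon_s;|\mathcal{Z}|)+\phi(\varepsilon_s;|\mathcal{Y}|)+\phi(\varepsilon_s;|\mathcal{Z}||\mathcal{Y}|)$, with no role for $\xi$ or the region variable.

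\textbf{A repair that keeps the theorem's structure.} Add hypotheses bounding by some $\gamma$ the TV distance between the laws of $a_s$ and $a_r$, and between those of $z_q^s$ and $z_p^r$. Each reweighted conditional entropy $H(U\mid A)$ then costs an extra $2\gamma\log|\mathcal{U}|$. This adds terms of order $\gamma\log(|\mathcal{Z}||\mathcal{Y}|)$ to $\Delta_s$ and $\gamma\log(|\mathcal{A}||\mathcal{Y}|)$ to $\Delta_a$.
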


\noindent The proof is deferred to Appendix \ref{proof_the2}. Theorem \ref{theorem2} implies that the synthetic samples are required to produce meaningful semantics within their informative regions and promote the attention alignment between $P$ and $Q$.

We \emph{claim} the above analysis to be consistent with the insights on $\mathcal{L}_{fb}$ of Eq.(\ref{l_fb}) and $\mathcal{L}_{align}$ of Eq.(\ref{L_align}): \emph{first}, $\mathcal{L}_{fb}$ promotes semantic aggregation towards \emph{informative regions} within synthetic samples, capitalizing on the inherent sparsity of the self-attention mechanism, which is designed to satisfy the condition $|I(a_s;y)-I(a_r;y)|\le \xi$ in Eq.(\ref{th22_align_regioninfo_main0}). \emph{Second}, anchored on \emph{masked informative regions}, $\mathcal{L}_{align}$ enforces the attention alignment  between $P$ and $Q$, helping satisfy the conditions  in Eq.(\ref{th22_align_joint_main0}) and Eq.(\ref{th22_align_cond_main0}). Therefore, we can obtain desirable synthetic samples via Eq.(\ref{L_G}) to benefit Q’s calibration.

\begin{table*}[t]
    \centering
\caption{Classification accuracy (\%) comparison with state-of-the-art approaches on the ImageNet dataset. †: the results reproduced by author-provided code; -: no results are reported. FP: full precision.
\emph{m}w\emph{n}a indicates the weights and activations are quantized to \emph{m}-bit and \emph{n}-bit. The best results are reported with \textbf{boldface}.}
    \resizebox{0.98\textwidth}{!}
    {
    \begin{tabular}{clcc|cccccc}
    \toprule
    \multirow{2}{*}{\textbf{Bit width}} &\multirow{2}{*}{\textbf{Methods}} &\multirow{2}{*}{\textbf{Venue}} &\multirow{2}{*}{\makecell{\textbf{Target} \\ \textbf{Architecture}}} &  \multicolumn{6}{c}{\textbf{Backbone Networks}} \\
    \cmidrule{5-10} 
       & & &  &  ViT-T   & ViT-B & DeiT-T & DeiT-S  & DeiT-B & Swin-T \\
    \midrule

\textbf{FP} &Baseline &-  &- &72.01  &84.53 &72.21 &79.85 &81.85 &81.35  \\	
\midrule

\multirow{2}{*}{\textbf{\emph{3}w\emph{3}a}} 					
			
&MimiQ \cite{choi2025mimiq} &AAAI' 25 &ViT &8.64$^\dag$  &41.28$^\dag$ &19.55$^\dag$ &27.39$^\dag$ &41.86$^\dag$ &42.90$^\dag$ \\	
\cmidrule{2-10}
\rowcolor{gray!20}
\cellcolor{white}&\textbf{MaskAQ (Ours)} &- &ViT &\textbf{11.50} &\textbf{43.39}  &\textbf{22.65} &\textbf{30.41} &\textbf{43.28} &\textbf{44.98} \\	

\midrule	

\multirow{8}{*}{\textbf{\emph{4}w\emph{4}a}}  

& GDFQ \cite{xu2020generative} &ECCV' 20 & CNN&		\textcolor{white}02.95	&	11.73	&	25.96	&	22.12	&	30.04	&	42.08		\\ 					
& Qimera \cite{choi2021qimera} &NeurIPS' 21 & CNN&		\textcolor{white}00.57		&	\textcolor{white}05.61	&	15.18	&	11.37	&	32.49	&	47.98		\\ 						
& AdaDFQ \cite{qian2023adaptive} &CVPR' 23 & CNN&		\textcolor{white}02.00		&	\textcolor{white}06.21	&	19.57	&	14.44	&	19.22	&	38.88		\\						
&PSAQ-ViT \cite{li2022patch} &ECCV' 22 & ViT &		\textcolor{white}00.67		&	\textcolor{white}00.94	&	19.61	&	\textcolor{white}05.90	&	\textcolor{white}08.74	&	22.71		\\ 						
& PSAQ-ViT V2 \cite{li2023psaq} &T-NNLS' 23 & ViT &		\textcolor{white}01.54		&	\textcolor{white}02.83	&	22.82	&	32.57	&	45.81	&	50.42		\\ 					
&GenQ \cite{li2024genq} &ECCV' 24  &ViT &-  &67.50 &- &- &\textbf{76.10} &-  \\
&CLAMP-ViT \cite{ramachandran2024clamp} &ECCV' 24  &ViT &-  &- &0.15 &0.13 &- &-  \\
&MimiQ \cite{choi2025mimiq} &AAAI' 25  &ViT &42.99	&62.91	&52.03	&62.72	&74.10	&69.33		\\		
\cmidrule{2-10}	
\rowcolor{gray!20}
\cellcolor{white}&\textbf{MaskAQ (Ours)} &-  &ViT &\textbf{44.90} &\textbf{70.53} &\textbf{53.10} &\textbf{63.81} &74.41 &\textbf{70.40} \\	

\midrule						
																\multirow{7}{*}{\textbf{\emph{5}w\emph{5}a}}
						
& GDFQ \cite{xu2020generative} &ECCV' 20 & CNN&		24.40	&	33.56	&	44.76	&	57.00	&	71.03	&	61.30		\\ 						
& Qimera \cite{choi2021qimera} &NeurIPS' 21 & CNN&		26.70	&	\textcolor{white}09.43	&	33.13	&	33.65	&	47.01	&	62.13		\\ 						
& AdaDFQ \cite{qian2023adaptive} &CVPR' 23 & CNN&		27.10	&	43.02	&	53.85	&	59.55	&	71.12	&	64.61		\\ 						
&PSAQ-ViT \cite{li2022patch} &ECCV' 22 & ViT &		17.66	&	16.80	&	53.36	&	47.35	&	57.23	&	58.63		\\ 						
& PSAQ-ViT V2 \cite{li2023psaq} &T-NNLS' 23 & ViT &		40.21	&	74.29	&	55.18	&	65.30	&	73.16	&	69.77		\\ 

&CLAMP-ViT \cite{ramachandran2024clamp} &ECCV' 24  &ViT &-  &- &5.42 &3.19 &- &-  \\						
&MimiQ \cite{choi2025mimiq} &AAAI' 25   &ViT &62.40	&78.09	&63.40	&72.59	&78.20	&76.39		\\	
\cmidrule{2-10}
\rowcolor{gray!20}
\cellcolor{white}&\textbf{MaskAQ (Ours)} &-  &ViT &\textbf{63.01} &\textbf{78.60} &\textbf{64.27} &\textbf{73.18} &\textbf{78.89} &\textbf{76.82}  \\									

\midrule						
																\multirow{7}{*}{\textbf{\emph{4}w\emph{8}a}}
						
& GDFQ \cite{xu2020generative} &ECCV' 20 & CNN&		62.65	&	81.68	&	65.82	&	76.49	&	80.03	&	78.90		\\ 						
& Qimera \cite{choi2021qimera} &NeurIPS' 21 & CNN&		61.80	&	63.22	&	61.90	&	70.10	&	72.38	&	73.93		\\ 						
& AdaDFQ \cite{qian2023adaptive} &CVPR' 23 & CNN&		64.67	&	82.43	&	67.71	&	76.92	&	80.49	&	79.70		\\ 						
&PSAQ-ViT \cite{li2022patch} &ECCV' 22 & ViT &		59.59	&	67.74	&	66.16	&	76.56	&	80.05	&	79.06		\\ 						
& PSAQ-ViT V2 \cite{li2023psaq} &T-NNLS' 23 & ViT &		66.78	&	84.02	&	68.23	&	78.27	&	81.15	&	79.98		\\ 

&CLAMP-ViT \cite{ramachandran2024clamp} &ECCV' 24  &ViT &-  &- &65.71 &76.44 &- &-  \\						
&MimiQ \cite{choi2025mimiq} &AAAI' 25  &ViT &68.15	&84.20	&69.86	&78.48	&81.34	&80.06		\\	
\cmidrule{2-10}
\rowcolor{gray!20}
\cellcolor{white} &\textbf{MaskAQ (Ours)} &-  &ViT &\textbf{68.36} &\textbf{84.30} &\textbf{69.95} &\textbf{79.44} &\textbf{81.40} &\textbf{80.19}  \\																				
\bottomrule						
    \end{tabular}}
    \label{tab:sota_comparison}
\end{table*}

\section{Experiment}
\subsection{Experimental Settings and Details}
To evaluate MaskAQ, we conduct the experiments on the typical image classification dataset, \emph{i.e.}, ImageNet (ILSVRC2012) \cite{russakovsky2015imagenet}, which contains 1.2M training images and 50k validation images from 1000 categories; while the  results on object detection and semantic segmentation tasks are provided in Appendix \ref{other_tasks}. Notably, merely validation sets are used to evaluate quantized models $Q$ under data-free setting. We adopt the different variants (\emph{e.g.}, the tiny, small, and base versions) of ViT \cite{dosovitskiy2020image}, DeiT \cite{touvron2021training},
and Swin Transformer \cite{liu2021swin} as backbone networks.

\noindent \textbf{Quantization}.
Following \cite{choi2025mimiq}, we employ a uniform quantization strategy, which converts floating-point values $\theta_p$ of both weights and activations from full-precision models $P$ into integers $\theta_q$ for quantized models $Q$ below:
\begin{equation}
\theta_q = clip(\round{\theta_p \cdot s - z}, T_{min}, T_{max}),
\end{equation} 
where $s$ and $z$ is the scale factor and the zero point. $\round{\cdot}$ is the round operator; $clip(\cdot,\cdot,\cdot)$ denotes the clip operation with minimum $T_{min}$ and maximum $T_{max}$ of the clip range.

\noindent \textbf{Implementation Details}.
For \emph{sample synthesis}, the Adam optimizer is adopted with a learning rate of 0.1, momentum coefficients of $\beta_1$=0.9 and $\beta_2$=0.999, and a batch size of 32. We produce 10,000 synthetic samples and optimize each batch for 2,000 steps with $\alpha$=1.0 and $\beta$=2.5e-5. For \emph{network calibration}, the SGD optimizer is employed with Nesterove momentum of 0.9, a learning rate of 1e-3 and a batch size of 16; $Q$ is calibrated for 200 epochs in total. Besides, for both sample synthesis and network calibration processes, we employ data augmentations from SimCLR \cite{chen2020simple}.
Regarding hyper-parameters, $\lambda_{fb}$ and $\lambda_{align}$ in Eq.(\ref{L_G}), the weight $w$, are set as 1.0, 0.1 and 2, respectively.
In particular, we dynamically anneal $k$ in Eq.(\ref{definition1}) over training, reducing it from 50\% to 10\% of $N$, thus facilitating the attention alignment. $k_{\min}$ and $p_{\text{drop}}$ in Eq.(\ref{num_pos}) are set as 1 and 0.3.
All experiments are implemented with the pytorch framework \cite{paszke2019pytorch} based on the code of \cite{choi2025mimiq}, and run on an NVIDIA A800 GPU and an Intel(R) Xeon(R) Gold 6342 CPU @ 2.80GHz.
\textbf{\emph{Due to page limitations, more discussions and results are accessible in Appendix \ref{additional_results}}}.

\subsection{Comparison with the State-of-the-arts}
To verify the superiority of our MaskAQ, we compare it with the typical DFQ arts, including: 1) GDFQ \cite{xu2020generative}, Qimera \cite{choi2021qimera} and AdaDFQ \cite{qian2023adaptive} are targeted at traditional Convolutional Neural Networks (CNNs); 2) PSAQ-ViT \cite{li2022patch} exploits the patch similarity to generate realistic samples; upon that, PSAQ-ViT V2 \cite{li2023psaq} improve the sample diversity via adversarial training; CLAMP-ViT \cite{ramachandran2024clamp} further considers image-patch-level contrastive learning to synthesize samples;  3) GenQ \cite{li2024genq} employs the advanced generative models to obtain the synthetic samples; 4) MimiQ \cite{choi2025mimiq} focuses on inter-head attention similarity for sample synthesis. In particular, GenQ only reports the results under 4w4a in the original paper. Besides, we adopt the results of GDFQ, Qimera, AdaDFQ and CLAMP-ViT for ViTs reported in MimiQ. Notably, we reproduce the results of MimiQ under the 3-bit setting using their official code, while most other approaches become highly unstable at 3w3a and tend to collapse to near-random accuracy, hence their results are not reported. Since the retained information is more severely compressed, and the distribution mismatch between synthetic samples and inputs expected by $Q$ becomes larger, 3-bit cases exhibit the marked performance gap compared to other bit widths, hindering practical application. Nonetheless, we present their experimental results to highlight the efficacy of our MaskAQ, \emph{i.e.}, preserving crucial information for $Q$’s calibration.

Tab.\ref{tab:sota_comparison} summarizes our findings as follows: \emph{first}, MaskAQ delivers consistent and substantial accuracy gains over state-of-the-art baselines, verifying the effectiveness of coupling the selected informative regions to yield high-quality synthetic samples. Impressively, MaskAQ exhibits at most 3.10\% accuracy gains on ImageNet compared to the competitors. GDFQ, Qimera and AdaDFQ designed for CNNs suffer from the poor performance or convergence when applied to ViT architectures, implying the necessity of MaskAQ to develop customized techniques. Besides, MaskAQ receives larger accuracy improvements (at least 18.91\% with 4w4a) over PSAQ-ViT and PSAQ-ViT V2, since MaskAQ is capable of mitigating the semantic dispersion within synthetic samples. Although GenQ introduces extra generative models to yield abundant synthetic data, MaskAQ still achieves comparable performance with GenQ, highlighting the effectiveness of  improving the quality of synthetic samples.  Under the same settings, CLAMP-ViT holds an obvious performance margin (at least 3\%) with MaskAQ.  In particular, MaskAQ outperforms MimiQ with the advantages of at most 7.62\%, benefiting from its focus on the attentional disparity between $P$ and $Q$ (Sec.\ref{region_coupling}).  \emph{Second}, MaskAQ holds continuous accuracy improvements over other approaches across varied backbone networks, which is consistent with our proposal of regularly refreshing the synthetic samples to adapt to the evolving state of $Q$ throughout the training process (Sec.\ref{refreshing}). As expected, MaskAQ maintains the similar accuracy gains (at least 1.42\%) with varied network architectures, \emph{e.g.}, ViT, DeiT, and Swin Transformer.
\emph{Finally}, MaskAQ delivers the pronounced improvements under ultra-low bit widths, \emph{e.g.}, 3 bits, confirming the \emph{necessity} of maintaining the sufficient mutual information between synthetic samples and varying $Q$ (Sec.\ref{theoretical_analysis}). Notably, for 3w3a, MaskAQ achieves substantial accuracy advantages (at most 3.1\%) with DeiT-T compared to MimiQ, which, in turn, reflects the non-negligible attentional disparity between $P$ and varying $Q$, especially for low-bit scenarios, in line with our analysis on the limitations of synthetic samples in Sec.\ref{sec:intro}.

\subsection{Ablation Studies}
\subsubsection{Discussion on Each Component of MaskAQ}
We further validate the effectiveness of several components constituting MaskAQ from the following aspects: \emph{w/o} $\mathcal{L}_{fb}$ in Eq.(\ref{l_fb}); \emph{w/o} $\mathcal{L}_{align}$  in Eq.(\ref{L_align}); $\mathcal{L}_{Q}$ \emph{w/o} binary mask in Eq.(\ref{calibration_loss}); abandoning the periodic sample refreshing strategy during the training process; and our MaskAQ. The ablation experiments are conducted with ViT-T and DeiT-T on ImageNet. Tab.\ref{abl_component} suggests the obvious accuracy advantages (at least 0.45\% and 0.93\%) of MaskAQ over other cases. Notably, the case \emph{w/o} $\mathcal{L}_{align}$ suffers from the largest performance degradation (1.57\% and 2.05\%) compared to MaskAQ, implying that the distribution alignment between synthetic samples and $Q$'s input distribution play a critical role in synthesizing high-quality samples (Sec.\ref{region_coupling}). Meanwhile, the case \emph{w/o} $\mathcal{L}_{fb}$ damages the model performance (\emph{e.g.}, 22.65\% to 20.97\%), confirming the \emph{importance} of the coherent image structure with synthetic samples. Regarding the calibration process, removing the mask in $\mathcal{L}_{Q}$ receives a relative small accuracy loss, indicating the priority of the informative regions when optimizing $Q$ (Sec.\ref{masked_calibration}).      As a byproduct, abandoning the sample refreshing strategy leads to 0.79\% and 1.22\% accuracy loss, since $Q$ is dynamically updated during the calibration process, in line with Sec.\ref{refreshing}.

\begin{table}[t]
\centering
\caption{Ablation study for classification accuracy (\%) about several components of MaskAQ. \emph{m}w\emph{n}a indicates the weights and activations are quantized to \emph{m}-bit and \emph{n}-bit. The best results are reported with \textbf{boldface}.}
    \resizebox{0.9\columnwidth}{!}
    {
    \begin{tabular}{cccc|cc}
    \toprule
            $\mathcal{L}_{fb}$ &$\mathcal{L}_{align}$ &\makecell{Sample \\ Refreshing} &$\mathcal{L}_{Q}$ &\makecell{ViT-T \\ (\textbf{\emph{4}w\emph{4}a})} &\makecell{DeiT-T \\ (\textbf{\emph{3}w\emph{3}a})} \\
    \midrule
     \textcolor[RGB]{192,0,0}{\ding{55}}&\textcolor[RGB]{88,142,50}{\ding{51}}  &\textcolor[RGB]{88,142,50}{\ding{51}}  &\textcolor[RGB]{88,142,50}{\ding{51}}  &43.62  &20.97  \\
     \textcolor[RGB]{88,142,50}{\ding{51}}&\textcolor[RGB]{192,0,0}{\ding{55}}  &\textcolor[RGB]{88,142,50}{\ding{51}}  &\textcolor[RGB]{88,142,50}{\ding{51}} &43.33   &20.60 \\
    \textcolor[RGB]{88,142,50}{\ding{51}}&\textcolor[RGB]{88,142,50}{\ding{51}}  &\textcolor[RGB]{192,0,0}{\ding{55}}  &\textcolor[RGB]{88,142,50}{\ding{51}} &44.11   &21.43  \\
     \textcolor[RGB]{88,142,50}{\ding{51}}&\textcolor[RGB]{88,142,50}{\ding{51}}  &\textcolor[RGB]{88,142,50}{\ding{51}}  &\textcolor[RGB]{192,0,0}{\ding{55}} &44.45   &21.72  \\
    \midrule
    \rowcolor{gray!20}
     \textcolor[RGB]{88,142,50}{\ding{51}}&\textcolor[RGB]{88,142,50}{\ding{51}}  &\textcolor[RGB]{88,142,50}{\ding{51}} &\textcolor[RGB]{88,142,50}{\ding{51}} &\textbf{44.90}  &\textbf{22.65}    \\    
    \bottomrule
    \end{tabular}
    }
    \label{abl_component}
\end{table}

\begin{figure}[t]
\centering
  \setcounter{figure}{2}
\includegraphics[width=1.0\columnwidth]{figures/para.pdf}
\caption{Ablation study for classification accuracy (\%) about the hyper-parameters (a) $\lambda_{fb}$ and (b) $\lambda_{align}$.
}
\label{para}
\end{figure}

\begin{figure*}[t]
\centering
\setcounter{figure}{3}
\includegraphics[width=0.95\textwidth]{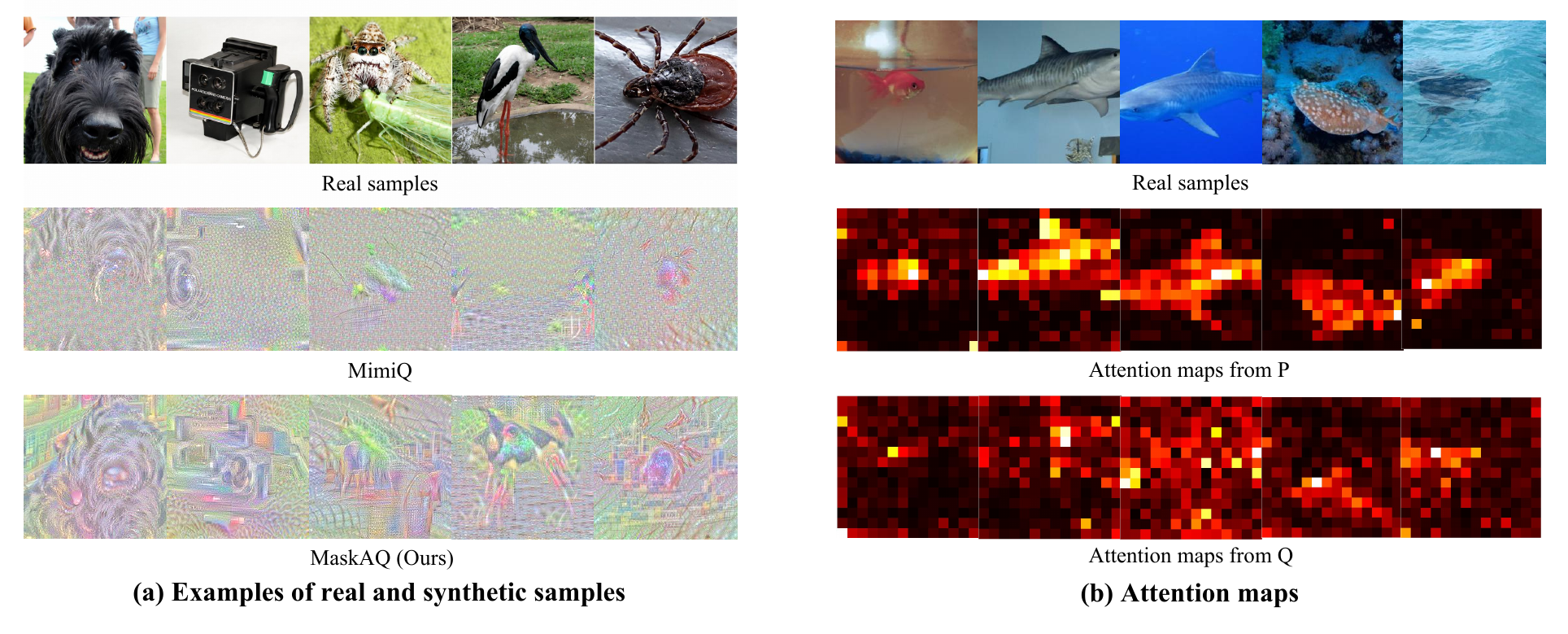}
\caption{(a) Examples of synthetic samples from MimiQ \cite{choi2025mimiq} and our MaskAQ, apart from the corresponding real samples. (b) Visualization of attention maps (the brighter, the larger) from the intermediate layer of $P$ and $Q$ based on MaskAQ.
}
\label{visual1}
\end{figure*}

\subsubsection{Parameter Studies}
We further investigate the effect of the parameters $\lambda_{fb}$ and $\lambda_{align}$ in Eq.(\ref{L_G}), which are utilized to balance the importance of $\mathcal{L}_{fb}$ and $\mathcal{L}_{align}$ during the sample synthesis process. To evaluate the parameter impacts,  we perform the ablation experiments based on DeiT-T. Specifically, we set varied $\lambda_{fb} \in \{0.1,1,10\}$ and $\lambda_{align} = \{0.01,0.1,1,10\}$, thereby exploring a comprehensive grid of hyperparameter configurations. Fig.\ref{para} illustrates that the optimal performance is achieved with $\lambda_{fb}^*=1$ and $\lambda_{align}^*=0.1$, implying that $\mathcal{L}_{fb}$ for informative region decoupling and $\mathcal{L}_{align}$ for masked attention coupling can effectively mitigate the semantic dispersion and attentional disparity during the sample synthesis, enabling high-quality synthetic samples. The above fact confirms the necessity of \emph{masked attention alignment between $P$ and $Q$ in Sec.\ref{region_decoupling} and Sec.\ref{region_coupling}}.

\subsection{Insights into MaskAQ from Visualization}
To provide further insights into MaskAQ, we perform the visual analysis on synthetic samples and attention maps with DeiT-T on ImageNet. Fig.\ref{visual1}(a) illustrates the examples of synthetic samples from MimiQ \cite{choi2025mimiq} and our MaskAQ, apart from the corresponding real samples; while Fig.\ref{visual1}(b) visualizes the attention maps from the intermediate layer of $P$ and $Q$. It is observed that, compared to existing arts, \emph{e.g.}, MimiQ \cite{choi2025mimiq}, the synthetic samples from MaskAQ (row 3 of Fig.\ref{visual1}(a)) exhibit rich semantic diversity and more coherent structural information. Meanwhile, these synthetic samples are capable of focusing on larger object regions (\emph{i.e.}, informative regions), which enable $Q$ to align with $P$, since low-precision $Q$ may not always be able to identify the correct regions due to large quantization error.
Additionally, the synthetic samples from our MaskAQ retain sufficient semantics from $P$ (row 1 of Fig.\ref{visual1}(b)), while maintain the attentional alignment between $P$ and $Q$ (row 2 of Fig.\ref{visual1}(b)) across the task-relevant regions, which is consistent with \emph{our proposal of coupling the selected informative regions of synthetic samples with varied Q in Sec.\ref{masked_attention_alignment}}.

\section{Conclusion and Limitations}
This paper presents MaskAQ, a novel and effective approach for Data-Free Quantization of ViTs. Our core innovation lies in the insights of \emph{informative regions}, which integrate the entropy-based region decoupling for semantic dispersion issue and the masked attention coupling for attentional disparity issue, to ensure high-quality sample synthesis. As a byproduct,  a periodic refreshing enables MaskAQ's continuous adaptation to evolving $Q$. Across a wide range of backbones and downstream tasks (\emph{i.e.}, classification, detection and segmentation), extensive experiments verify the superiority of MaskAQ over state-of-the-art approaches.

\textbf{Limitations}. The current MaskAQ still depends on iterative image synthesis, requiring additional generation overhead, while its applicability to more aggressive quantization settings remains to be explored in future work.

\section*{Acknowledgements}

This work was supported in part by Beijing Natural Science Foundation (No. L257005), and is also supported by National Natural Science Foundation of China (No. 92570204 and 62441235).
This research is also partially supported by CCF-NetEase ThunderFire Innovation Research Funding (No. CCF-Netease 202513), and Institute of Advanced Medicine and Frontier Technology (2023IHM01080).

\section*{Impact Statement}
This paper presents work whose goal is to advance the field of Machine
Learning. There are many potential societal consequences of our work, none of which we feel must be specifically highlighted here.

\nocite{langley00}

\bibliography{example_paper}
\bibliographystyle{icml2026}

\newpage
\appendix
\onecolumn

%
%
%

\section{Details of Related Work}
\label{details_related}
As indicated in Sec.1 of the main body, we provide the comprehensive review of related work in this section, which mainly encompasses Vision Transformer architectures, data-driven quantization and data-free quantization approaches.

\subsection{Vision Transformer}
The adaptation of the Transformer architecture from natural language processing to computer vision has reshaped visual recognition backbone design \cite{chen2021transformer,wu2021rethinking}. The original Transformer introduced multi-head self-attention as a general sequence modeling primitive, enabling models to capture long-range dependencies without recurrence or convolution. 
Building on that, Dosovitskiy et al. \cite{dosovitskiy2020image} introduced the Vision Transformer (ViT), which processes images as sequences of patches. With large-scale pre-training, ViT competes with or outperforms convolutional networks on image classification, demonstrating that such pre-training can replace convolutional inductive biases.
Subsequent work has improved ViT training efficiency and architectural suitability. DeiT \cite{touvron2021training} introduced a data-efficient training method with token-based distillation, achieving competitive performance on ImageNet without massive external datasets and thus facilitating practical ViT adoption.
 Other lines of research modify the ViT architecture to better capture multi-scale and local visual structure. For example, Swin Transformer \cite{liu2021swin} injects a hierarchical representation and shifted-window self-attention to achieve linear complexity with image size while retaining cross-window interactions. These advances establish it as a powerful general-purpose backbone for diverse computer vision tasks, \emph{e.g.}, object detection \cite{carion2020end}, semantic segmentation \cite{chen2021pre,zheng2021rethinking} and video recognition \cite{arnab2021vivit,neimark2021video}. 
Therefore, the DFQ for ViTs are directly influenced by their architectural distinctions from Convolutional Neural Networks (CNNs), including patch-based tokenization, LayerNorm usage, attention-driven operations, and the distribution sensitivity of multi-head activations.

\subsection{Data-Driven Quantization for Vision Transformers}
Data-driven quantization for Vision Transformers includes two complementary paradigms: quantization-aware training (QAT), where quantizers and weights are learned jointly on labeled data \cite{esser2020learned}; and post-training quantization (PTQ), where a small unlabeled calibration set and reconstruction objectives are used to set quantization parameters without end-to-end retraining \cite{li2023repq}. Block-wise reconstruction methods such as BRECQ \cite{li2021brecq} demonstrated that PTQ can be pushed to very low bit-widths (e.g., INT4/INT2) by reconstructing network blocks, inspiring follow-up PTQ work for transformers. 
Recent research \cite{li2023repq,jiang2025adfq} in ViT post-training quantization has revealed that architectural components like LayerNorm and non-linear activations generate unique, non-uniform activation distributions—characterized by inter-channel variations, power-law patterns, and outliers—which necessitate specialized quantization strategies. To address these challenges, recent advances have introduced innovative solutions: the AdaLog framework \cite{wu2024adalog} proposes adaptive logarithmic quantizers that dynamically optimize quantization parameters to better handle post-GELU and post-softmax activations, while APHQ-ViT \cite{wu2025aphq} employs Hessian-aware reconstruction techniques with improved importance weighting to stabilize ultra-low precision (3-4 bit) quantization for ViTs.

However, both QAT and PTQ necessitate access to the original training data, which poses significant risks to data privacy and security in data-sensitive application scenarios.

\subsection{Data-Free Quantization for Vision Transformers}
Data-free quantization (DFQ), which compresses models without original data, originated in the CNN domain. Early CNN-based methods \cite{xu2020generative,choi2021qimera,zhong2022intraq,qian2023rethinking,qian2023adaptive,dung2024sharpness,kim2025synq} leveraged BatchNorm statistics to approximate data distributions. However, the patch-wise and self-attention mechanisms of ViTs violate core CNN assumptions, rendering them inadequate, thus motivating dedicated ViT-specific DFQ designs.

As a pioneer in ViT quantization, PSAQ-ViT \cite{li2022patch} and PSAQ-ViT V2 \cite{li2023psaq} synthesize calibration data by modeling patch relationships and attention responses, which introduces a patch-similarity objective to generate samples that mimic ViT interactions, thereby setting an early DFQ baseline for ViTs.
Later work \cite{tong2025data} advances this concept by proposing a curriculum-based synthesis pipeline that generates progressively challenging samples in a curriculum (easy-to-hard) manner. This strategy enhances quantization stability without the need for fine-tuning.
A second line of work prioritizes semantic and attention alignment. SARDFQ \cite{Zhong_2025_ICCV} addresses the lack of semantic completeness in naive synthetic images by introducing attention-prior alignment and multi-semantic prompting, thereby ensuring calibration samples better represent key features and attention maps.
Complementary to semantic prompting, MimiQ \cite{choi2025mimiq} observes that real images produce structured, inter-head attention similarity in ViTs. It optimizes synthetic samples to mimic these patterns and employs head-wise attention distillation, significantly boosting low-bit quantization performance.

Different from the prior work, our paper investigates the semantic dispersion and attentional disparity in the existing arts and aims to couple the selected informative regions of synthetic samples with varying Q, thus yielding high-quality synthetic samples.

\section{Proof Theorem~\ref{theorem1} and Theorem~\ref{theorem2}}
\label{detailed_proof}
In this section, we further provide the detailed proofs of Theorem~\ref{theorem1} and Theorem~\ref{theorem2}  in the main body.

\subsection{Preliminaries}
\textbf{Notations}.
We consider a classification setting with a discrete class label $y\in\mathcal{Y}$, where $\mathcal{Y}$ is a finite set.
For each \emph{real} sample, $a_r$ denotes its informative-region variable, supported on a finite alphabet $\mathcal{A}_r$.
The full-precision model $P$ and the quantized model $Q$ produce discrete prediction outputs on the informative region of real samples,
denoted by $z_p^r$ and $z_q^r$, respectively, supported on finite alphabets $\mathcal{Z}_p^r$ and $\mathcal{Z}_q^r$.
For each \emph{synthetic} sample $x_s$, $a_s$ denotes its informative-region variable supported on $\mathcal{A}_s$,
and $Q$ produces the corresponding prediction output $z_q^s$ supported on $\mathcal{Z}_q^s$.

\textbf{Total variation distance.}
For two distributions $P$ and $Q$ defined on the same finite alphabet $\Omega$, the total variation (TV) distance is
\begin{equation}
D_{\mathrm{TV}}(P,Q)\triangleq \frac12\sum_{\omega\in\Omega}\big|P(\omega)-Q(\omega)\big|,
\end{equation}
where a basic property is that TV distance does not increase under marginalization.
Specifically, if $U$ is a marginal of $(U,V)$, then
\begin{equation}
D_{\mathrm{TV}}(P_U,Q_U)\le D_{\mathrm{TV}}(P_{U,V},Q_{U,V}).
\label{eq:tv_marginal_contraction}
\end{equation}
To enable the  TV distance between conditional distributions, we interpret
\begin{equation}
D_{\mathrm{TV}}\big(P(U\mid A),Q(U\mid A)\big)\le \varepsilon
\end{equation}
as the \emph{averaged conditional TV}:
\begin{equation}
\sum_{a\in\mathcal{A}} P(A{=}a)\, D_{\mathrm{TV}}\!\big(P(U\mid A{=}a),Q(U\mid A{=}a)\big)\le \varepsilon.
\label{eq:avg_cond_tv_def}
\end{equation}

\textbf{Entropy continuity}.
Define
\begin{equation}
\phi(\varepsilon;M)\triangleq h_2(\varepsilon)+\varepsilon\log(M-1),
\qquad
h_2(\varepsilon)=-\varepsilon\log\varepsilon-(1-\varepsilon)\log(1-\varepsilon),
\label{eq:def_phi}
\end{equation}
where $\log$ is the natural logarithm. We use the following standard continuity bound.

\begin{lemma}[Entropy continuity]\label{lem:entropy_cont}
Let $U$ be a discrete random variable supported on an alphabet of size $M$.
If $D_{\mathrm{TV}}(P_U,Q_U)\le \varepsilon\le \frac12$, then
\begin{equation}
|H_P(U)-H_Q(U)|\le \phi(\varepsilon;M).
\label{eq:entropy_cont}
\end{equation}
\end{lemma}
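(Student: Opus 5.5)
This is the standard Fannes–Audenaert / Zhang continuity bound, and I would prove it by a maximal coupling followed by Fano's inequality. Write $\delta \triangleq D_{\mathrm{TV}}(P_U,Q_U)\le\varepsilon\le\frac12$.

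\textbf{Step 1: coupling.} Construct a coupling $(U,U')$ with $U\sim P_U$, $U'\sim Q_U$ and $\Pr[U\neq U']=\delta$. Concretely, put mass $\min(P(\omega),Q(\omega))$ on the diagonal. Distribute the residual masses $(P-Q)_+$ and $(Q-P)_+$, each of total $\delta$, off the diagonal, for instance by the normalized product.

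\textbf{Step 2: Fano.} Since $H(U,U')=H(U')+H(U\mid U')$, we get
\begin{equation*}
H_P(U)-H_Q(U)\le H(U\mid U').
\end{equation*}
Let $E=\mathbf 1\{U\neq U'\}$. Then
\begin{equation*}
H(U\mid U')\le H(E)+\Pr[E=1]\,H(U\mid U',E=1)\le h_2(\delta)+\delta\log(M-1),
\end{equation*}
because given $E=1$ and $U'$, the variable $U$ takes one of at most $M-1$ values. Swapping the roles of $P$ and $Q$ gives the symmetric bound. Hence
\begin{equation*}
|H_P(U)-H_Q(U)|\le\phi(\delta;M).
\end{equation*}

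\textbf{Step 3: monotonicity.} On $[0,\frac12]$ the map $\delta\mapsto\phi(\delta;M)$ is nondecreasing. Indeed, $h_2$ is increasing there, and $\delta\log(M-1)$ is nondecreasing because $M\ge2$; the case $M=1$ is trivial since both entropies vanish. Therefore $\phi(\delta;M)\le\phi(\varepsilon;M)$, which is the claim. This step is exactly where the hypothesis $\varepsilon\le\frac12$ enters.

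The only mildly delicate point is checking that the coupling in Step 1 attains $\Pr[U\neq U']=\delta$ exactly, with the correct marginals. If $\delta=0$ the distributions coincide and the bound is trivial. Otherwise both residual vectors have total mass $\delta$, so dividing their product by $\delta$ yields valid marginals. This residual mass is also supported off the diagonal, since $(P-Q)_+$ and $(Q-P)_+$ have disjoint supports. Everything else is routine. Alternatively, one could simply cite the sharp bound of Audenaert (2007) and Zhang (2007), which is precisely this statement.
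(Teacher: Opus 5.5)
Your proof is correct. It differs from the paper mainly in that the paper gives no proof at all: it invokes the lemma as a ``standard continuity bound'', which is the sharp Audenaert--Zhang inequality. Citing buys brevity. Your maximal-coupling-plus-Fano argument makes the lemma self-contained and shows what each hypothesis is for. Steps 1--2 give $|H_P(U)-H_Q(U)|\le\phi(\delta;M)$ for every $\delta\in[0,1]$ with no restriction. The assumption $\varepsilon\le\frac12$ is used only in Step 3, to pass from $\delta$ to $\varepsilon$ by monotonicity. Since $\frac{d}{d\delta}\phi(\delta;M)=\log\frac{(1-\delta)(M-1)}{\delta}$, that step in fact works for all $\varepsilon\le 1-\frac1M$, which is a weaker requirement than the paper's.

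One small correction concerns your treatment of $M=1$. There $\phi(\varepsilon;1)$ contains $\varepsilon\log 0$, which is undefined (or $-\infty$) for $\varepsilon>0$. So the fact that both entropies vanish does not make the stated inequality hold. The lemma should simply be read with $M\ge 2$, which is the only meaningful case.
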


\begin{lemma}[Conditional entropy continuity]\label{lem:cond_entropy_cont}
Let $U$ and $A$ be discrete, with $U$ supported on an alphabet of size $|\mathcal{U}|$.
If the averaged conditional TV satisfies
\[
\sum_{a} P(A{=}a)\, D_{\mathrm{TV}}\!\big(P(U\mid A{=}a),Q(U\mid A{=}a)\big)\le \varepsilon\le \frac12,
\]
then
\begin{equation}
|H_P(U\mid A)-H_Q(U\mid A)|\le \phi(\varepsilon;|\mathcal{U}|).
\label{eq:cond_entropy_cont}
\end{equation}
\end{lemma}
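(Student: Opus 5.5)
The plan is to reduce the conditional statement to the unconditional Lemma~\ref{lem:entropy_cont}, applied slice by slice in $a$, and then recombine the slices using concavity of $\phi(\cdot;|\mathcal{U}|)$.

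\textbf{Convention.} The averaged conditional TV in Eq.~(\ref{eq:avg_cond_tv_def}) is weighted by $P(A{=}a)$. So I will read both conditional entropies with that same weighting:
\[
H_P(U\mid A)=\sum_a P(A{=}a)\,H\big(P(U\mid A{=}a)\big),\qquad
H_Q(U\mid A)=\sum_a P(A{=}a)\,H\big(Q(U\mid A{=}a)\big).
\]
Without a shared weighting on $A$ the claim cannot hold as stated, since the hypothesis says nothing about $Q(A)$. With this convention, set $\varepsilon_a\triangleq D_{\mathrm{TV}}\big(P(U\mid A{=}a),Q(U\mid A{=}a)\big)$, so that $\sum_a P(A{=}a)\varepsilon_a\le\varepsilon$. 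The triangle inequality then gives
\[
|H_P(U\mid A)-H_Q(U\mid A)|\le\sum_a P(A{=}a)\,\big|H(P(U\mid a))-H(Q(U\mid a))\big|.
\]
The case $|\mathcal{U}|=1$ is trivial because both entropies vanish, so I assume $M=|\mathcal{U}|\ge 2$.

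\textbf{Per-slice bound (main obstacle).} I need $|H(P(U\mid a))-H(Q(U\mid a))|\le\phi(\varepsilon_a;M)$ for every $a$. The difficulty is that only the average of the $\varepsilon_a$ is at most $\frac12$; an individual $\varepsilon_a$ may exceed $\frac12$, where Lemma~\ref{lem:entropy_cont} as stated does not apply. My first attempt is to invoke the sharp Fannes--Audenaert form of the continuity bound, $|H(p)-H(q)|\le h_2(\delta)+\delta\log(M-1)$, which holds for all $\delta=D_{\mathrm{TV}}(p,q)\in[0,1]$. The standard coupling proof of Lemma~\ref{lem:entropy_cont} yields exactly this, with no restriction to $\delta\le\frac12$: couple $p,q$ so that they disagree with probability $\delta$, then apply Fano's inequality. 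Hence $\phi(\varepsilon_a;M)$ bounds every slice.

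\textbf{Averaging.} The function $\phi(\cdot;M)=h_2(\cdot)+(\cdot)\log(M-1)$ is concave on $[0,1]$, being the sum of a concave and a linear function. By Jensen's inequality,
\[
\sum_a P(A{=}a)\,\phi(\varepsilon_a;M)\le\phi\Big(\sum_a P(A{=}a)\varepsilon_a;M\Big).
\]
Next, $\phi'(\delta)=\log\frac{1-\delta}{\delta}+\log(M-1)\ge 0$ for $\delta\le 1-\frac1M$, so $\phi$ is nondecreasing on $[0,1-\frac1M]$. Since $M\ge2$, this interval contains $[0,\frac12]$. Because $\sum_a P(A{=}a)\varepsilon_a\le\varepsilon\le\frac12$, the right-hand side is at most $\phi(\varepsilon;M)$. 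Chaining the three displays gives Eq.~(\ref{eq:cond_entropy_cont}).
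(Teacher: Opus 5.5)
The paper gives no proof of this lemma. It is quoted, together with Lemma~\ref{lem:entropy_cont}, as a ``standard continuity bound'' and then used, so there is no argument of the paper's to compare against. Your proof is correct and fills that gap, and it handles two points the paper passes over.

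First, your convention that both conditional entropies are averaged against $P(A)$ is necessary, not a matter of taste. If $H_Q(U\mid A)$ were weighted by $Q(A)$, take $P(U\mid A{=}a)=Q(U\mid A{=}a)$ for every $a$ but $Q(A)\neq P(A)$: the hypothesis holds with $\varepsilon=0$, yet the two conditional entropies differ in general. As a side consequence, the lemma read this way does not by itself cover the paper's later uses where the conditioning variables have different marginals, such as $z_q^r$ versus $z_p^r$, or $a_s$ versus $a_r$.

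Second, applying Lemma~\ref{lem:entropy_cont} slice by slice fails as stated, because an individual $\varepsilon_a$ may exceed $\frac12$. Your route resolves this correctly: maximal coupling plus Fano gives $h_2(\delta)+\delta\log(M-1)$ for every $\delta\in[0,1]$, and you then use concavity of $\phi(\cdot;M)$ on $[0,1]$ and its monotonicity on $[0,1-\frac1M]\supseteq[0,\frac12]$.

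A slightly shorter variant avoids Jensen. Draw $A\sim P(A)$ and, given $A{=}a$, couple $(U,U')$ maximally between $P(U\mid A{=}a)$ and $Q(U\mid A{=}a)$, so that $\Pr(U\neq U')=\sum_a P(A{=}a)\varepsilon_a$. Since $H(U\mid A)-H(U'\mid A)\le H(U\mid U',A)$, one application of Fano gives $\phi\big(\sum_a P(A{=}a)\varepsilon_a;M\big)$ directly. Only your final monotonicity step is then needed.

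One small quibble: for $|\mathcal{U}|=1$ the quantity $\phi(\varepsilon;1)$ contains $\varepsilon\log 0$. That case is therefore ill-posed for $\varepsilon>0$ rather than trivial, and it is cleaner to assume $|\mathcal{U}|\ge 2$ from the outset.
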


\subsection{Proof of Theorem~\ref{theorem1}}
\label{proof_the1}
\vspace{0.8em}

\noindent \textbf{Theorem 1} (Informative Region Alignment)
\emph{Let $a_r$ denote the informative-region variable of real samples, and let $z_p^r$ and $z_q^r$ be the corresponding prediction outputs of $P$ and $Q$, respectively. 
Assume that for some $\varepsilon_r\le \frac{1}{2}$,}
\begin{align}
D_{\mathrm{TV}}\!\Big(p_{\theta_q}(z_q^r,y\mid a_r),\ p_{\theta_p}(z_p^r,y\mid a_r)\Big)&\le \varepsilon_r,
\label{th21_assump_joint_main}\\
D_{\mathrm{TV}}\!\Big(p_{\theta_q}(a_r,y\mid z_q^r),\ p_{\theta_p}(a_r,y\mid z_p^r)\Big)&\le \varepsilon_r,
\label{th21_assump_cond_main}
\end{align}
\emph{where $D_{\mathrm{TV}}(\cdot,\cdot)$ denotes the total variation (TV) distance.
Then $Q$ approximately preserves $P$’s predictive information on informative regions in the sense that}
\begin{equation}
\big|I_{\theta_q}(z_q^r;y)-I_{\theta_p}(z_p^r;y)\big|\ \le\ \Delta_r(\varepsilon_r),
\label{th21_concl_main}
\end{equation}
\emph{where $\Delta_r(\varepsilon_r)$ vanishes as $\varepsilon_r\to 0$.}

\begin{proof}
According to the chain rule, the mutual information can be decomposed as:
\begin{align}
I_{\theta_q}(z_q^r;y)
&=I_{\theta_q}(z_q^r;y\mid a_r)+I(a_r;y)-I_{\theta_q}(a_r;y\mid z_q^r),
\label{eq:t1_q_decomp}\\
I_{\theta_p}(z_p^r;y)
&=I_{\theta_p}(z_p^r;y\mid a_r)+I(a_r;y)-I_{\theta_p}(a_r;y\mid z_p^r).
\label{eq:t1_p_decomp}
\end{align}
Subtracting Eq.~(\ref{eq:t1_p_decomp}) from Eq.~(\ref{eq:t1_q_decomp}) cancels the shared term $I(a_r;y)$, yielding
\begin{equation}
I_{\theta_q}(z_q^r;y)-I_{\theta_p}(z_p^r;y)
=
\underbrace{\Big(I_{\theta_q}(z_q^r;y\mid a_r)-I_{\theta_p}(z_p^r;y\mid a_r)\Big)}_{\triangleq \Delta_1}
-
\underbrace{\Big(I_{\theta_q}(a_r;y\mid z_q^r)-I_{\theta_p}(a_r;y\mid z_p^r)\Big)}_{\triangleq \Delta_2}.
\label{eq:t1_reduce}
\end{equation}
By the triangle inequality, we have
\begin{equation}
\big|I_{\theta_q}(z_q^r;y)-I_{\theta_p}(z_p^r;y)\big|\le |\Delta_1|+|\Delta_2|.
\label{eq:t1_triangle}
\end{equation}

\textbf{Part 1: Bound $|\Delta_1|$ using Eq.~(\ref{th21_assump_joint_main})}

We first expand conditional mutual information into conditional entropies below:
\begin{align}
|\Delta_1|
&=\Big|I_{\theta_q}(z_q^r;y\mid a_r)-I_{\theta_p}(z_p^r;y\mid a_r)\Big| \nonumber\\
&= \Big| [H_{\theta_q}(z_q^r\mid a_r)+H_{\theta_q}(y\mid a_r)+H_{\theta_q}(z_q^r,y\mid a_r)] \nonumber\\
 &\quad \ - [H_{\theta_p}(z_p^r\mid a_r)+H_{\theta_p}(y\mid a_r)+H_{\theta_p}(z_p^r,y\mid a_r)] \Big| \nonumber \\
&\le
\Big|H_{\theta_q}(z_q^r\mid a_r)-H_{\theta_p}(z_p^r\mid a_r)\Big|
+
\Big|H_{\theta_q}(y\mid a_r)-H_{\theta_p}(y\mid a_r)\Big| \nonumber\\
& \quad \ +\Big|H_{\theta_q}(z_q^r,y\mid a_r)-H_{\theta_p}(z_p^r,y\mid a_r)\Big|.
\label{eq:t1_delta1_entropy}
\end{align}

By using the assumption in Eq.~(\ref{th21_assump_joint_main})
\[
D_{\mathrm{TV}}\!\Big(p_{\theta_q}(z_q^r,y\mid a_r),\ p_{\theta_p}(z_p^r,y\mid a_r)\Big)\le \varepsilon_r
\]
and the marginalization contraction of TV, we also have
\[
D_{\mathrm{TV}}\!\Big(p_{\theta_q}(z_q^r\mid a_r),\ p_{\theta_p}(z_p^r\mid a_r)\Big)\le \varepsilon_r,
\qquad
D_{\mathrm{TV}}\!\Big(p_{\theta_q}(y\mid a_r),\ p_{\theta_p}(y\mid a_r)\Big)\le \varepsilon_r.
\]
Hence, applying the conditional entropy continuity bound Eq.~(\ref{eq:cond_entropy_cont}) to each term in Eq.~(\ref{eq:t1_delta1_entropy}) yields
\begin{equation}
|\Delta_1|
\le
\phi(\varepsilon_r;|\mathcal{Z}_q^r|)
+\phi(\varepsilon_r;|\mathcal{Y}|)
+\phi(\varepsilon_r;|\mathcal{Z}_q^r||\mathcal{Y}|).
\label{eq:t1_delta1_bound}
\end{equation}

\textbf{Part 2: Bound $|\Delta_2|$ using Eq.~(\ref{th21_assump_cond_main})}

Similarly, we can expand
\begin{align}
|\Delta_2|
&=\Big|I_{\theta_q}(a_r;y\mid z_q^r)-I_{\theta_p}(a_r;y\mid z_p^r)\Big| \nonumber\\
&\le
\Big|H_{\theta_q}(a_r\mid z_q^r)-H_{\theta_p}(a_r\mid z_p^r)\Big|
+
\Big|H_{\theta_q}(y\mid z_q^r)-H_{\theta_p}(y\mid z_p^r)\Big| \nonumber\\
&\quad+
\Big|H_{\theta_q}(a_r,y\mid z_q^r)-H_{\theta_p}(a_r,y\mid z_p^r)\Big|.
\label{eq:t1_delta2_entropy}
\end{align}

By using the assumption in Eq.~(\ref{th21_assump_cond_main})
\[
D_{\mathrm{TV}}\!\Big(p_{\theta_q}(a_r,y\mid z_q^r),\ p_{\theta_p}(a_r,y\mid z_p^r)\Big)\le \varepsilon_r
\]
and the marginalization contraction, we have
\[
D_{\mathrm{TV}}\!\Big(p_{\theta_q}(a_r\mid z_q^r),\ p_{\theta_p}(a_r\mid z_p^r)\Big)\le \varepsilon_r,
\qquad
D_{\mathrm{TV}}\!\Big(p_{\theta_q}(y\mid z_q^r),\ p_{\theta_p}(y\mid z_p^r)\Big)\le \varepsilon_r.
\]
Hence, applying Eq.~(\ref{eq:cond_entropy_cont}) to each term in Eq.~(\ref{eq:t1_delta2_entropy}) yields
\begin{equation}
|\Delta_2|
\le
\phi(\varepsilon_r;|\mathcal{A}_r|)
+\phi(\varepsilon_r;|\mathcal{Y}|)
+\phi(\varepsilon_r;|\mathcal{A}_r||\mathcal{Y}|).
\label{eq:t1_delta2_bound}
\end{equation}

\textbf{Part 3}

Plugging Eq.~(\ref{eq:t1_delta1_bound}) and Eq.(\ref{eq:t1_delta2_bound}) into Eq.~(\ref{eq:t1_triangle}) gives
\begin{align}
\big|I_{\theta_q}(z_q^r;y)-I_{\theta_p}(z_p^r;y)\big|
&\le
\phi(\varepsilon_r;|\mathcal{Z}_q^r|)
+2\phi(\varepsilon_r;|\mathcal{Y}|)
+\phi(\varepsilon_r;|\mathcal{Z}_q^r||\mathcal{Y}|) \nonumber\\
&\quad
+\phi(\varepsilon_r;|\mathcal{A}_r|)
+\phi(\varepsilon_r;|\mathcal{A}_r||\mathcal{Y}|).
\label{eq:t1_full_delta}
\end{align}
Define the right-hand side as $\Delta_r(\varepsilon_r)$.
Since $\phi(\varepsilon;M)\to 0$ as $\varepsilon\to 0$ for any fixed $M$, we conclude that $\Delta_r(\varepsilon_r)\to 0$ as $\varepsilon_r\to 0$.

\qedhere
\end{proof}


\subsection{Proof of Theorem~\ref{theorem2}}
\label{proof_the2}
\vspace{0.8em}

\noindent \textbf{Theorem 2} (Utility of Synthetic Sample)
\emph{Let $a_s$ denote the informative-region variable of synthetic samples $x_s$, and let $z_q^s$ be the corresponding prediction outputs of $Q$. 
If there exists $\varepsilon_s\le \frac{1}{2}$ such that}
\begin{equation}
D_{\mathrm{TV}}\!\Big(p_{\theta_q}(z_q^s,y\mid a_s),\ p_{\theta_p}(z_p^r,y\mid a_r)\Big)\le \varepsilon_s,
\label{th22_align_joint_main}
\end{equation}
\begin{equation}
D_{\mathrm{TV}}\!\Big(p_{\theta_q}(a_s,y\mid z_q^s),\ p_{\theta_p}(a_r,y\mid z_p^r)\Big)\le \varepsilon_s,
\label{th22_align_cond_main}
\end{equation}
\emph{and the region-label informativeness between synthetic and real informative regions is matched:}
\begin{equation}
\big|I(a_s;y)-I(a_r;y)\big|\le \xi,
\label{th22_align_regioninfo_main}
\end{equation}
\emph{then synthetic samples enable $Q$ to approximate $P$ in predictive information:}
\begin{equation}
\big|I_{\theta_q}(z_q^s;y)-I_{\theta_p}(z_p^r;y)\big|
\ \le\ 
\Delta_s(\varepsilon_s)+\eta+\Delta_a(\varepsilon_s),
\label{th22_concl_main}
\end{equation}
\emph{where $\Delta_s(\varepsilon_s)$ and $\Delta_a(\varepsilon_s)$ vanish as $\varepsilon_s\to 0$.}

\begin{proof}
According to the chain rule, we decompose the following
\begin{align}
I_{\theta_q}(z_q^s;y)
&=I_{\theta_q}(z_q^s;y\mid a_s)+I(a_s;y)-I_{\theta_q}(a_s;y\mid z_q^s),
\label{eq:t2_q_decomp}\\
I_{\theta_p}(z_p^r;y)
&=I_{\theta_p}(z_p^r;y\mid a_r)+I(a_r;y)-I_{\theta_p}(a_r;y\mid z_p^r).
\label{eq:t2_p_decomp}
\end{align}
Subtract Eq.~(\ref{eq:t2_p_decomp}) from Eq.~(\ref{eq:t2_q_decomp}) and take absolute value, yielding 
\begin{align}
\big|I_{\theta_q}(z_q^s;y)-I_{\theta_p}(z_p^r;y)\big|
&\le
\underbrace{\big|I_{\theta_q}(z_q^s;y\mid a_s)-I_{\theta_p}(z_p^r;y\mid a_r)\big|}_{\triangleq T_1}
+
\underbrace{\big|I(a_s;y)-I(a_r;y)\big|}_{\triangleq T_2} \nonumber\\
&\quad+
\underbrace{\big|I_{\theta_q}(a_s;y\mid z_q^s)-I_{\theta_p}(a_r;y\mid z_p^r)\big|}_{\triangleq T_3}.
\label{eq:t2_split}
\end{align}
By assumption Eq.~(\ref{th22_align_regioninfo_main}), we have $T_2\le \xi$.
It remains to bound $T_1$ and $T_3$.

\textbf{Part 1: Bound $T_1$ using Eq.~(\ref{th22_align_joint_main})}

By expanding conditional mutual information into conditional entropies, we have 
\begin{align}
T_1
&=
\Big|I_{\theta_q}(z_q^s;y\mid a_s)-I_{\theta_p}(z_p^r;y\mid a_r)\Big| \nonumber\\
&\le
\Big|H_{\theta_q}(z_q^s\mid a_s)-H_{\theta_p}(z_p^r\mid a_r)\Big|
+
\Big|H_{\theta_q}(y\mid a_s)-H_{\theta_p}(y\mid a_r)\Big| \nonumber\\
&\quad+
\Big|H_{\theta_q}(z_q^s,y\mid a_s)-H_{\theta_p}(z_p^r,y\mid a_r)\Big|.
\label{eq:t2_T1_entropy}
\end{align}
Based on the assumption in Eq.~(\ref{th22_align_joint_main})
\[
D_{\mathrm{TV}}\!\Big(p_{\theta_q}(z_q^s,y\mid a_s),\ p_{\theta_p}(z_p^r,y\mid a_r)\Big)\le \varepsilon_s
\]
and the marginalization contraction, they also imply
\[
D_{\mathrm{TV}}\!\Big(p_{\theta_q}(z_q^s\mid a_s),\ p_{\theta_p}(z_p^r\mid a_r)\Big)\le \varepsilon_s,
\qquad
D_{\mathrm{TV}}\!\Big(p_{\theta_q}(y\mid a_s),\ p_{\theta_p}(y\mid a_r)\Big)\le \varepsilon_s.
\]
Applying Eq.~(\ref{eq:cond_entropy_cont}) term-wise in Eq.~(\ref{eq:t2_T1_entropy}) yields
\begin{equation}
T_1\le
\phi(\varepsilon_s;|\mathcal{Z}_q^s|)
+\phi(\varepsilon_s;|\mathcal{Y}|)
+\phi(\varepsilon_s;|\mathcal{Z}_q^s||\mathcal{Y}|)
\ \triangleq\ \Delta_s(\varepsilon_s),
\label{eq:t2_T1_bound}
\end{equation}
which leads to a vanishing error term $\Delta_s(\varepsilon_s)\to 0$ as $\varepsilon_s\to 0$.

\textbf{Part 2: Bound $T_3$ using Eq.~(\ref{th22_align_cond_main})}

Similarly, we expand
\begin{align}
T_3
&=
\Big|I_{\theta_q}(a_s;y\mid z_q^s)-I_{\theta_p}(a_r;y\mid z_p^r)\Big| \nonumber\\
&\le
\Big|H_{\theta_q}(a_s\mid z_q^s)-H_{\theta_p}(a_r\mid z_p^r)\Big|
+
\Big|H_{\theta_q}(y\mid z_q^s)-H_{\theta_p}(y\mid z_p^r)\Big| \nonumber\\
&\quad+
\Big|H_{\theta_q}(a_s,y\mid z_q^s)-H_{\theta_p}(a_r,y\mid z_p^r)\Big|.
\label{eq:t2_T3_entropy}
\end{align}
Assumption Eq.~(\ref{th22_align_cond_main}) states
\[
D_{\mathrm{TV}}\!\Big(p_{\theta_q}(a_s,y\mid z_q^s),\ p_{\theta_p}(a_r,y\mid z_p^r)\Big)\le \varepsilon_s.
\]
By marginalization contraction, it implies
\[
D_{\mathrm{TV}}\!\Big(p_{\theta_q}(a_s\mid z_q^s),\ p_{\theta_p}(a_r\mid z_p^r)\Big)\le \varepsilon_s,
\qquad
D_{\mathrm{TV}}\!\Big(p_{\theta_q}(y\mid z_q^s),\ p_{\theta_p}(y\mid z_p^r)\Big)\le \varepsilon_s.
\]
Applying Eq.~(\ref{eq:cond_entropy_cont}) term-wise in Eq.~(\ref{eq:t2_T3_entropy}) yields
\begin{equation}
T_3\le
\phi(\varepsilon_s;|\mathcal{A}_s|)
+\phi(\varepsilon_s;|\mathcal{Y}|)
+\phi(\varepsilon_s;|\mathcal{A}_s||\mathcal{Y}|)
\ \triangleq\ \Delta_a(\varepsilon_s),
\label{eq:t2_T3_bound}
\end{equation}
where $\Delta_a(\varepsilon_s)\to 0$ as $\varepsilon_s\to 0$.

\textbf{Part 3}

Plugging $T_1\le \Delta_s(\varepsilon_s)$ from Eq.~(\ref{eq:t2_T1_bound}),
$T_2\le \xi$ from Eq.~(\ref{th22_align_regioninfo_main}),
and $T_3\le \Delta_a(\varepsilon_s)$ from Eq.~(\ref{eq:t2_T3_bound})
into Eq.~(\ref{eq:t2_split}), we conclude
\[
\big|I_{\theta_q}(z_q^s;y)-I_{\theta_p}(z_p^r;y)\big|
\le
\Delta_s(\varepsilon_s)+\xi+\Delta_a(\varepsilon_s).
\]
Since $\phi(\varepsilon;M)\to 0$ as $\varepsilon\to 0$ for fixed $M$,
both $\Delta_s(\varepsilon_s)$ and $\Delta_a(\varepsilon_s)$ vanish as $\varepsilon_s\to 0$.

\qedhere
\end{proof}

\begin{table}[h]
    \centering
\caption{Comparison with state-of-the-art approaches on the COCO dataset and ADE20K dataset.
\emph{m}w\emph{n}a indicates the weights and activations are quantized to \emph{m}-bit and \emph{n}-bit. The best results are reported with \textbf{boldface}.}
    \resizebox{0.75\textwidth}{!}
    {
    \begin{tabular}{clc|ccc}
    \toprule
    \multirow{2}{*}{\textbf{Bit width}} &\multirow{2}{*}{\textbf{Methods}} &\multirow{2}{*}{\textbf{Venue}}  &  \multicolumn{2}{c}{\textbf{COCO Dataset }}  &  \multicolumn{1}{c}{\textbf{ADE20K Dataset}} \\
    \cmidrule{4-6} 
       & &  &AP$_\text{box}$    &AP$_\text{mask}$ &mIoU  \\
    \midrule

\multirow{5}{*}{\textbf{\emph{4}w\emph{4}a}} 					
&PSAQ-ViT \cite{li2022patch} &ECCV' 22  &0.06   &0.06  &1.65   \\		
&PSAQ-ViT V2 \cite{li2023psaq} &T-NNLS' 23  &4.52   &5.03  &3.83   \\			
&MimiQ \cite{choi2025mimiq} &AAAI' 25  &26.41   &26.63  &29.92   \\	
\cmidrule{2-6}
\rowcolor{gray!20}
\cellcolor{white}&\makecell{\textbf{MaskAQ} \\ \textbf{(Ours)}} &-  &\textbf{27.11} &\textbf{27.19}  &\textbf{31.04}  \\	

\midrule	

\multirow{5}{*}{\textbf{\emph{5}w\emph{5}a}} 					
&PSAQ-ViT \cite{li2022patch} &ECCV' 22  &0.41   &0.46  &20.26   \\		
&PSAQ-ViT V2 \cite{li2023psaq} &T-NNLS' 23  &32.69   &31.21  &26.35   \\			
&MimiQ \cite{choi2025mimiq} &AAAI' 25  &41.63   &38.53  &38.88   \\	
\cmidrule{2-6}
\rowcolor{gray!20}
\cellcolor{white}&\makecell{\textbf{MaskAQ} \\ \textbf{(Ours)}} &-  &\textbf{42.55} &\textbf{38.96}  &\textbf{40.12}  \\	
																			
\bottomrule						

    \end{tabular}}
   
    \label{tab:other_comparison}
\end{table}

\begin{figure*}[t]
\centering
\includegraphics[width=0.95\textwidth]{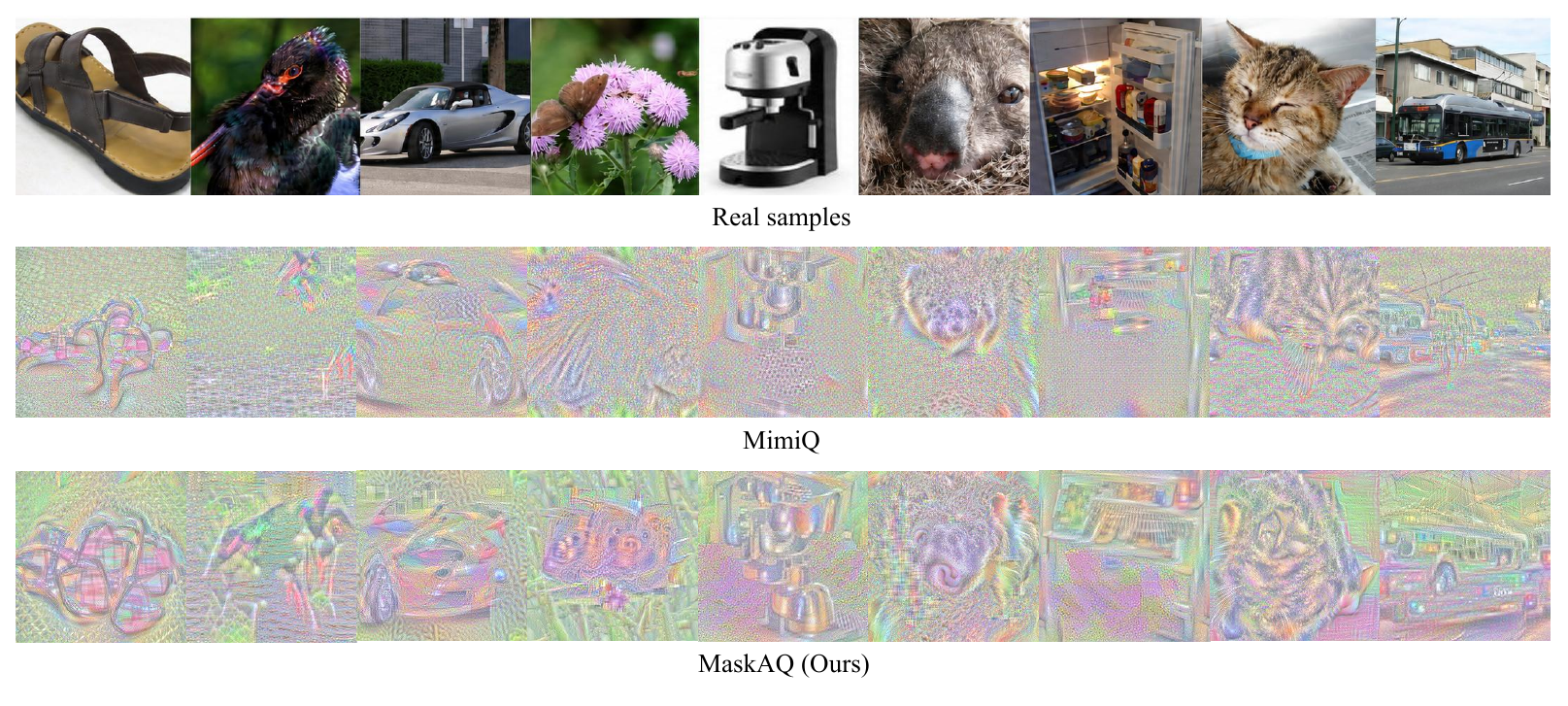}
\caption{More examples of synthetic samples from MimiQ \cite{choi2025mimiq} and our MaskAQ, apart from the corresponding real samples.
}
\label{visual_supp}
\end{figure*}

\section{Additional Discussions and Results}
\label{additional_results}
As indicated in Sec.3.1 of the main body, we further offer additional discussions and more experimental results.

\subsection{Additional Results on Object Detection and Semantic Segmentation Tasks}
\label{other_tasks}
Apart from the image classification task in the main body, we further report the experimental results on object detection and semantic segmentation tasks. Specifically, we conduct object detection experiments with Swin-T backbone on the COCO dataset \cite{lin2014microsoft}, which is a widely used benchmark with rich instance-level annotations across diverse scenes. Besides, we further evaluate semantic segmentation with Swin-T backbone on the ADE20K dataset \cite{zhou2017ade20k}, which provides high-quality pixel-wise annotations over a broad set of semantic categories and scene types. For COCO, we report COCO-style bounding-box (AP$_\text{box}$) and instance-mask (AP$_\text{mask}$), where AP is averaged over IoU thresholds 0.50:0.05:0.95 and over all classes; while for ADE20K, Mean Intersection over Union (mIoU) serves as the evaluation metrics.
Notably, for PSAQ-ViT \cite{li2022patch} and PSAQ-ViT V2 \cite{li2023psaq}, we adopt the results reported in MimiQ \cite{choi2025mimiq}.
Table \ref{tab:other_comparison} suggests that our MaskAQ achieves great improvements over other approaches. For example, MaskAQ exhibits at most 0.92 and 1.24 gains on the COCO and ADE20K datasets, compared to state-of-the-art approaches, \emph{e.g.}, MimiQ. The above fact further confirms the \emph{broad compatibility} of MaskAQ on other tasks.

\subsection{Visualization of More Synthetic Samples}
In Sec.3.4 of the main body, we visualize the synthetic samples from the existing arts and our MaskAQ. In this section, we offer more visual results of synthetic samples from MimiQ \cite{choi2025mimiq} and our MaskAQ, apart from the corresponding real samples, as illustrated in Fig.\ref{visual_supp}.
The results show that, the synthetic samples (row 3 of Fig.\ref{visual_supp}) from MaskAQ exhibit rich semantic diversity and more coherent structural information. Meanwhile, these synthetic samples are capable of focusing on larger object regions (\emph{i.e.}, informative regions), which enable $Q$ to align with $P$, since low-precision $Q$ may not always be able to identify the correct regions due to large quantization error. 
Additionally, we visualize the attention maps from the intermediate layer of $P$ and $Q$, as illustrated in Fig.\ref{visual2_supp}. It is observed that the synthetic samples from our MaskAQ retain sufficient semantics from $P$, while maintain the attentional alignment between $P$ and $Q$ (row 2 and 3 of Fig.\ref{visual2_supp}) across the task-relevant regions.
The above fact confirms \emph{the effectiveness of coupling the selected informative regions to yield high-quality synthetic samples}.

\begin{figure*}[h]
\centering
\includegraphics[width=0.95\textwidth]{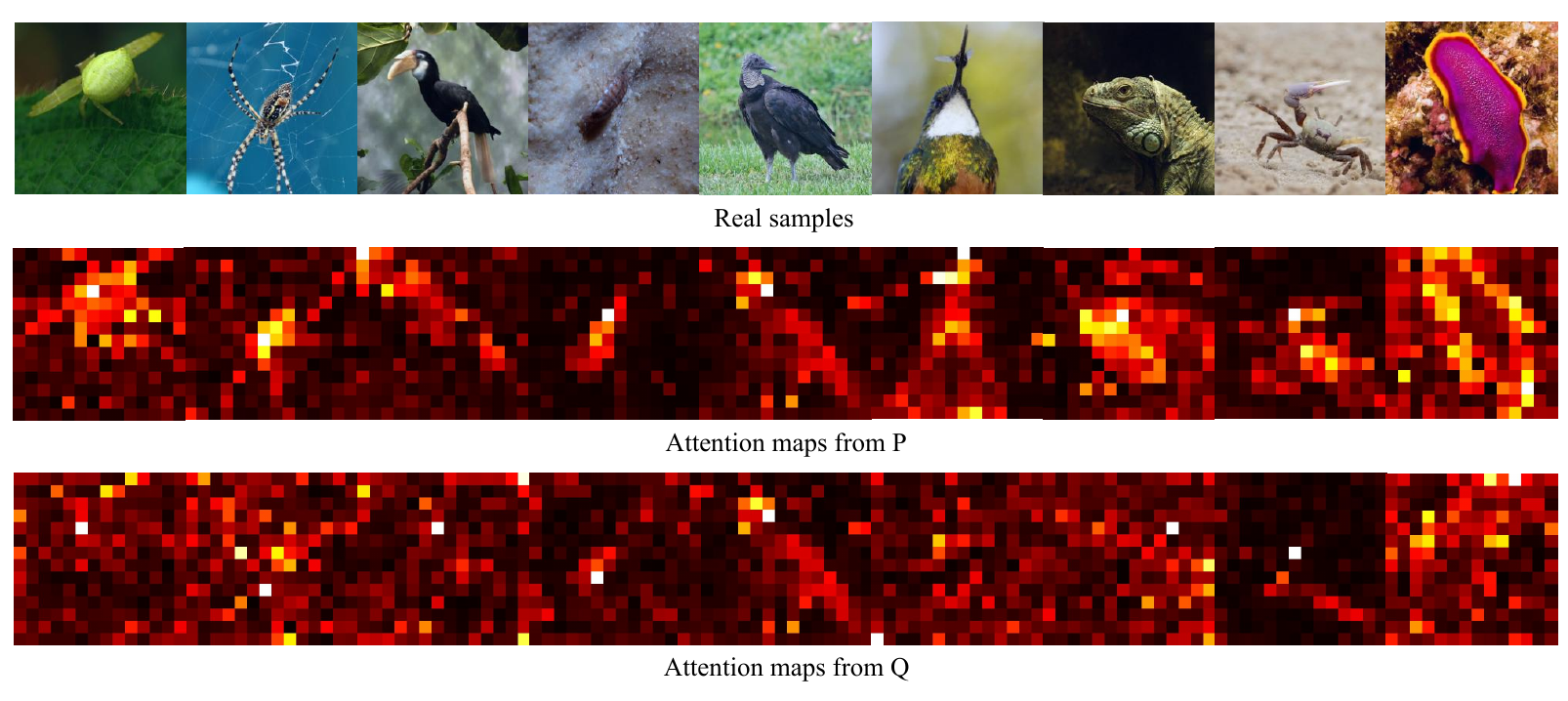}
\caption{Additional visualization of attention maps (the brighter, the larger) from the intermediate layer of $P$ and $Q$ based on MaskAQ.
}
\label{visual2_supp}
\end{figure*}

%
%

\end{document}